\documentclass[11pt]{article}
\usepackage{microtype}
\usepackage{graphicx}
\usepackage{subcaption}
\usepackage{booktabs}
\usepackage{amsfonts}
\usepackage{amsmath}
\usepackage{amssymb}
\usepackage{mathtools}
\usepackage{amsthm}
\usepackage{makecell}
\usepackage{enumerate}
\usepackage{algorithm, algorithmic}

\usepackage{bookmark}
\usepackage{authblk}
\usepackage{hyperref,url,dsfont}
\usepackage{natbib}
\usepackage{fullpage}

\usepackage{float}
\usepackage{wrapfig}
\usepackage{caption}

\hypersetup{
    colorlinks,
    breaklinks,
    urlcolor = black,
    linkcolor = blue,
    citecolor = blue,
}

\usepackage[capitalize,noabbrev]{cleveref}

\theoremstyle{plain}
\newtheorem{theorem}{Theorem}[section]
\newtheorem{proposition}{Proposition}[section]
\newtheorem{lemma}{Lemma}[section]

\theoremstyle{definition}
\newtheorem{definition}{Definition}[section]
\newtheorem{assumption}{Assumption}[section]
\theoremstyle{remark}
\newtheorem{remark}{Remark}[section]

\DeclareFixedFont{\myfontb}{OT1}{ptm}{bx}{n}{11pt}

\def \altereq {\stackrel{\mathrm a}{=}}
\def \A {\mathbf A}

\def \bLambda {\mathbf\Lambda}
\def \bOmega {\mathbf\Omega}
\def \bSigma {\mathbf\Sigma}
\def \c {\mathbf c}

\def \d {\mathbf d}
\def \D {\mathbf D}
\def \Des {\mathbf{Des}}
\def \e {\mathrm e}
\def \E {\mathbb E}
\def \given {~|~}
\def \G {\mathcal G}
\def \I {\mathbb I}
\def \M {\mathbf M}
\def \N {\mathbf N}

\def \pa {\mathbf{pa}}
\def \Pa {\mathbf{Pa}}
\def \pred {\mathbf{pred}}
\def \Pred {\mathbf{Pred}}
\def \S {\mathcal S}

\def \v {\mathbf v}
\def \V {\mathbf V}

\def \W {\mathbf W}
\def \x {\mathbf x}
\def \X {\mathbf X}
\def \y {\mathbf y}
\def \Y {\mathbf Y}
\def \z {\mathbf z}
\def \Z {\mathbf Z}

\begin{document}

\title{Order-based Rehearsal Learning}

\author{%
  Yu-Xuan Tao$ ^{1,2}$~ Tian-Zuo Wang$ ^{1,2}$~ Zhi-Hua Zhou$ ^{1,2}$\\ 
  $^1$ National Key Laboratory for Novel Software Technology, Nanjing University, China\\
  $^2$ School of Artifcial Intelligence, Nanjing University, China\\
}
\date{}
\maketitle

\begin{abstract}

\noindent When a machine learning (ML) model forecasts an undesired event, one often seeks a decision to avoid it, known as the \emph{avoiding undesired future (AUF)} problem. 
Many \emph{rehearsal learning} methods have been proposed for AUF, but they rely on an underlying graph structure; learning such a graph from observational data is challenging and can incur substantial estimation error.
In this work, we demonstrate that the \emph{order} structure can be sufficient for AUF decision-making, and propose the first order-based rehearsal learning method. Although an order is less informative than a graph, it can be sufficient to identify the influence of decisions from observational data, suggesting that learning the entire graph is not always necessary.
To learn the order, we develop an information-theoretic method that imposes no restrictions on the form of structural functions or the type of noise distributions.
For AUF decision-making, we construct an order-based sampler to approximate the influence of decisions and, combined with a surrogate objective for maximizing the post-decision success probability, reduce the AUF task to a differentiable optimization problem.
Experiments show that our order learning method outperforms existing methods, and that our AUF approach not only surpasses methods relying on learned graphs or learned orders, but also matches or even exceeds oracle baselines that are given the true graph.

\end{abstract}

\section{Introduction}
Various machine learning (ML) models have demonstrated strong performance on prediction tasks \citep{DL4CV, DL4NLP, DL4recommender}. In many real-world scenarios, however, the goal is not merely to predict outcomes but to make better decisions. Specifically, when an ML model forecasts an undesired event, such as a decline in sales, one may demand a decision to avoid this, which is referred to as the \emph{avoiding undesired future (AUF)} problem \citep{Zhou2022}. To address this, several \emph{rehearsal learning} approaches have been proposed \citep{nips/QWZ2023, nips/Du2025MuR, aaai/Qin2025gradient, ijcai/Taol2025sequential}. They typically leverage an underlying graph structure to recommend a decision maximizing the probability that the outcome lies within a pre-specified desirable region.

However, the aforementioned methods assume access to a given graph structure that characterizes the relations among all variables, which rarely holds in real-world applications. A plausible solution is to learn the graph structure from the observational data, for which plenty of relevant methods have been developed. For example, in the absence of mutual influence relations and latent variables, a directed acyclic graph (DAG) is widely used to characterize the relations among the variables, and numerous causal discovery methods are proposed for learning a DAG from data \citep{causal_book_Pearl2000, causal_book_Spirtes2000, causal_survey_methods2019, causal_survey_methods2022}. Nevertheless, learning such a graph structure is often challenging and typically requires strong assumptions \citep{FCI2008, GES2002, LinGAM2006, nips/ANM2008, geometry_faithfulness2013, causal_representation_learning_KunZhang2024, causal_FengXie2024}.

Motivated by the inherent difficulty of graph learning, we revisit a fundamental question: is an explicit graph structure truly necessary for AUF decision-making? In many practical tasks, humans often make effective decisions without full knowledge of the underlying graph structure~\citep{Zhou2022}. This observation suggests that there may exist a structure that is easier to learn than a graph structure, while still being sufficient to support effective decision-making in AUF problems.
 
In this paper, we highlight the importance of the \emph{order} structure for decision-making in AUF problems, and further propose the first order-based rehearsal learning method. An order structure encodes substantially less information than a graph structure; it is therefore natural to expect that learning an order is easier than learning a graph. We show that the influence of decisions can be identified from the observational data using only the variable order, thus establishing the sufficiency of order structure for AUF decision-making.

To recover the variable order, we develop OLEM (\emph{Order Learning via conditional Entropy Maximization}). OLEM learns the order recursively by repeatedly identifying and removing the ``last'' variable from the remaining set of variables. Each step is achieved by maximizing a conditional entropy objective, which is simple to implement and readily compatible with a wide range of entropy estimators \citep{entropy_estimate_kl(1nn), entropy_estimate_ensemble, entropy_estimate_knn}. Compared with existing methods for learning the variable order~\citep{CAM2014, aistats/LISTEN2018, icml/SCORE2022, iclr/DiffAN2023, nips/CaPS2024}, OLEM provides soundness guarantees without requiring restrictive assumptions on a specific form of structural functions or the type of noise distributions. Moreover, OLEM can be implemented in polynomial time with respect to the variable number, offering an efficiency advantage over classical constraint-based discovery methods such as PC and FCI methods~\citep{causal_book_Spirtes2000, FCI2008}.

Building on the learned order, we further develop a decision-making method, OLEM-Rh, which searches for an action that maximizes the probability that the outcome falls within a desired region. Specifically, we employ the conditional flow model \citep{conditional_flows_ardizzone2019, conditional_flows_winkler2019} to construct a sampler for the post-decision distribution based on the order, enabling us to approximate the influence of candidate decisions. Combining with the surrogate objective proposed by \citet{aaai/Qin2025gradient} for maximizing the desired-region probability, our method reduces AUF decision-making to a differentiable optimization problem which can be solved using standard gradient-based optimization methods. Experiments validate that OLEM outperforms existing order learning methods, and further show that for AUF problems, OLEM-Rh surpasses pipelines that first learn a graph or learn an order using existing methods, while performing comparably to, or even better than, the oracle baselines that take the true graph structure as input.

Our main contributions are summarized as follows:
\begin{enumerate}[1.]
    \item We highlight the importance of the order structure for AUF problems. We show that the influence of decisions can be identified from observational data given only a variable order, implying that the order structure can be sufficient for AUF decision-making.
    \item We develop a recursive method to learn the variable order, which comes with soundness guarantees while requiring milder assumptions than relevant methods.
    \item We present a decision-making method based on the variable order by formulating a differentiable optimization problem using a post-decision distribution sampler and a surrogate objective, resulting in the first order-based rehearsal learning method.
\end{enumerate}

\section{Problem Formulation}
\label{Section: AUF formulation}

In this section, we introduce the data-generating model and notations formalizing the AUF problem. We assume there are no latent variables and no feedback, \emph{i.e.}, no cyclic or mutual influence relations.

In this paper, we consider a random vector $\V=(V_1,\dots,V_d)$ generated by the additive model
\begin{align}
    V_i = f_i(\Pa_i)+\varepsilon_i,~i\in[d]\triangleq\{1,\dots,d\}.
    \label{eq: SEM}
\end{align}
$\Pa_i\triangleq\{V_j~|~j~\text{is parent of}~i~\text{in}~\G\}$, where $\G$ is a directed acyclic graph with node set $[d]$. Noise variables $\varepsilon_i$ are independent.
Each $f_i$ is non-constant in every $V_j\in\Pa_i$ without loss of generality; else $\Pa_i$ can be redefined to satisfy this.
We denote the model above by a triple $\mathcal M=(\G, \{f_i\}_i, \{\varepsilon_i\}_i)$.

For an AUF decision-making task \citep{nips/QWZ2023}, the variables $V_1,\dots,V_d$ are separated into three disjoint parts $\mathbf{X},\mathbf{Y}$, and $\mathbf{Z}$. The random vector $\X=(X_i)_i$ denotes contextual variables, which are observed before decision-making, $\Y=(Y_i)_i$ denotes decision outcome variables, which the decision-makers wish to fall into a desired region $\S$, and are observed after the decision, and $\Z=(Z_i)_i$ denotes intermediate variables, which appear after observing $\X$ and before observing $\Y$. Note that, any variable in $\Z$ or $\Y$ cannot be a parent of a contextual variable $X_i$, as $X_i$ has been observed before the decisions. Moreover, part of the intermediate variables $\Z$ are alterable, denoted by $\Z_{\A}=(Z_{a_1},\dots,Z_{a_{|\A|}})$, where $\A=(a_i)_i$ is the index vector and $|\A|$ denotes its dimensionality. The goal is to recommend an alteration $\z_{\A}=(z_{a_1},\dots,z_{a_{|\A|}})$ of $\Z_{\A}$, so as to increase the probability that $\Y$ falls into the pre-specified desired region $\S$, given the context $\x$. Here, $\S$ is considered to be a convex polytope, i.e. $\S=\{\y\given\M\y\preceq\d\}$, where the matrix $\M$ and vector $\d$ are given. $\mathbf\Delta_{\A}$ denotes the feasible domain of $\Z_{\A}$. Formally, we aim to solve the following optimization problem:
\begin{align}
\begin{split}
    \max_{\z_{\A}}\quad &\Pr\{\Y\in\S\given\X=\x,~\Z_\A\altereq\z_\A\}\\
    \text{s.t.}\quad &~\z_{\A}\in \mathbf\Delta_{\A},
    \label{eq: AUF}
\end{split}
\end{align}
where $\Z_\A\altereq\z_\A$ means setting $\Z_\A$ to the alteration $\z_\A$. Note that we do not assume access to the model $\mathcal M$. Instead, we only have observational data $\D=\{\V^i\}_i$.

\section{Proposed Method}
\label{Section: proposed method}
In this section, we demonstrate the sufficiency of the order structure in AUF problems and detail our order-learning approach. Further, we propose a decision method exploiting an order-based sampler.

\subsection{Order Structure}
\label{Subsection: order sufficient}

The order structure is introduced in \autoref{Def: order}. Subsequently, we present \autoref{Prop: post-order distribution factorize}, which implies that given the order of the true graph $\G$, the influence of decisions in the AUF problem (\ref{eq: AUF}) can be identified from the observational distribution. This result demonstrates that the order structure can be sufficient for addressing AUF decision-making tasks.

\begin{definition}[Order]
    Given a graph $\G$ with node set $[d]$, a permutation $\pi=(\pi(1),\dots,\pi(d))$ of $1,2,\dots,d$ is called an order of $\G$, if and only if $\forall i,j\in[d]$, ``$j$ is a descendant of $i$ in $\G$'' implies ``$\pi^{-1}(i) < \pi^{-1}(j)$''. We write $\pi\sim\G$ to mean that $\pi$ is an order of $\G$.
    \label{Def: order}
\end{definition}

\begin{proposition}
    \label{Prop: post-order distribution factorize}
    Given a random vector $\V$ generated by the model $\mathcal M=(\G, \{f_i\}_i, \{\varepsilon_i\}_i)$, suppose $\V$ is separated into $\X,\Z,\Y$ with $\Z_\A\subseteq\Z$ according to the AUF setting described in $\autoref{Section: AUF formulation}$.
        Given an order $\pi\sim\G$, it holds that
        \begin{align*}
            &p(\v\given\X=\x,~\Z_\A\altereq\z_\A)\\
            &=\delta_{\X=\x}(\v)~\delta_{\Z_\A=\z_\A}(\v)\prod_{i}p(v_i\given \pred_i(\pi)),
        \end{align*}
        where the product is taken over the $V_i$-s that are not contained in $\X$ or $\Z_\A$. $\Pred_i(\pi)\triangleq\{V_j\given \pi^{-1}(j)<\pi^{-1}(i)\}$ denotes the predecessors of $V_i$ according to $\pi$, and $\pred_i(\pi)$ denotes an assignment to $\Pred_i(\pi)$. $\delta_{\X=\x}(\v)$ is a Dirac function satisfying: (1) $\delta_{\X=\x}(\v)=0$ if the sub-vector of $\v$ corresponding to $\X$ is not equal to $\x$, and (2) $\int\delta_{\X=\x}(\v)\d\v=1$. The Dirac function $\delta_{\Z_\A=\z_\A}(\v)$ is similarly defined.
\end{proposition}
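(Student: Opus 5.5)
We prove the factorization in three steps: write the post-intervention density with the truncated (g-formula) factorization over the parents in $\G$; argue that $\X$ and $\Z_\A$ occupy an initial segment of $\pi$ that is ancestrally closed; and then replace each parent set $\Pa_i$ by the predecessor set $\Pred_i(\pi)$.

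\textbf{Step 1: truncated factorization in terms of parents.} The conditioning on $\X=\x$ and the alteration $\Z_\A\altereq\z_\A$ amount to replacing, in the model $\mathcal M$, the structural equations of the altered variables by constants $\z_\A$. The equations of all other variables, with their independent noises, are kept, and we restrict to the context $\X=\x$. Since $\G$ is acyclic and the noises are independent, the altered model is again an additive model on a DAG: the DAG obtained from $\G$ by deleting the edges into $\Z_\A$. Its joint density therefore factorizes as $\delta_{\Z_\A=\z_\A}(\v)\prod_{i:V_i\notin\Z_\A} p(v_i\given \pa_i)$. Each kept conditional equals the observational one, because $p(v_i\given\pa_i)=p_{\varepsilon_i}(v_i-f_i(\pa_i))$ and this depends only on $f_i$ and the law of $\varepsilon_i$.

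To condition on $\X=\x$, we use that no variable of $\Z$ or $\Y$ is a parent of any $X_i$. Hence the ancestors of $\X$ lie in $\X$, and the marginal of $\X$ is unaffected by the alteration. Conditioning on $\X=\x$ then divides out the factors $\prod_{X_i}p(x_i\given\pa_i)$ and inserts $\delta_{\X=\x}(\v)$. This gives
\[
p(\v\given\X=\x,\Z_\A\altereq\z_\A)=\delta_{\X=\x}(\v)\,\delta_{\Z_\A=\z_\A}(\v)\prod_{i}p(v_i\given\pa_i),
\]
with the product over $V_i\notin\X\cup\Z_\A$.

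\textbf{Step 2: replacing $\pa_i$ by $\pred_i(\pi)$.} For every $i$ with $V_i\notin \X\cup\Z_\A$, we show that $p(v_i\given\pred_i(\pi))=p(v_i\given\pa_i)$ under the observational distribution. By Definition~\ref{Def: order}, $\Pa_i\subseteq\Pred_i(\pi)$. Moreover, every variable in $\Pred_i(\pi)\setminus\Pa_i$ is a non-descendant of $V_i$, since any descendant of $i$ comes after $i$ in $\pi$.

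The local Markov property of the additive model then yields $V_i\perp \Pred_i(\pi)\setminus\Pa_i\given\Pa_i$. Concretely, $\varepsilon_i$ is independent of all noises of non-descendants, and every non-descendant of $V_i$ is a function of those noises alone. Given $\Pred_i(\pi)$, we have $V_i=f_i(\pa_i)+\varepsilon_i$ with $\varepsilon_i$ independent of $\Pred_i(\pi)$. Hence $p(v_i\given\pred_i(\pi))=p_{\varepsilon_i}(v_i-f_i(\pa_i))=p(v_i\given\pa_i)$. Substituting this into the display of Step 1 gives the claim.

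\textbf{Main obstacle.} The delicate point is Step 1. We must justify the manipulations with Dirac factors, and we must make sure the conditionals $p(v_i\given\pred_i(\pi))$ are evaluated at values where the observational density is well defined. The intervention can push the variables outside the observational support, so some positivity or support condition on the noise densities is implicitly required for the equality to make sense.

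I would handle both issues by working directly with the structural equations. The post-decision distribution is exactly the pushforward of the independent noises $\{\varepsilon_i\}$ through the modified equations. Generating the variables sequentially along $\pi$, each $V_i\notin\X\cup\Z_\A$ has conditional law $p_{\varepsilon_i}(\cdot-f_i(\pa_i))$ given everything preceding it, which by Step 2 equals $p(v_i\given\pred_i(\pi))$. This sequential-sampling argument gives the factorization directly and avoids dividing by densities.
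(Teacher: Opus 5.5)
Your proof is correct and follows essentially the same route as the paper: a truncated factorization over parents using that $\X$ is ancestrally closed, then the local Markov property to replace $\pa_i$ by $\pred_i(\pi)$. The only differences are cosmetic. The paper handles the context by noting that conditioning on $\X=\x$ equals intervening on $\X$, whereas you divide out the $\X$-marginal. Your remark about positivity and support is a real subtlety that the paper's proof leaves implicit.
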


\begin{remark}
    \autoref{Prop: post-order distribution factorize} demonstrates that, given the order structure, the post-decision distribution $p(\v\given\X=\x,~\Z_\A\altereq\z_\A)$ is identifiable from the observational distribution. Further, $\Pr\{\Y\in\S\given\X=\x,~\Z_\A\altereq\z_\A\}$, which constitutes the objective of the AUF problem \eqref{eq: AUF}, can be derived from observational data. This confirms that the influence of AUF decisions can be identified using the variable order, thereby establishing the sufficiency of the order structure for AUF decision-making.
\end{remark}

With the sufficiency of the order structure for AUF decision-making established, we first study order learning. We present our method for learning the variable order in \autoref{Subsection: order learning}, and then introduce the corresponding order-based decision-making method in \autoref{Subsection: order-based decision-making}.

\subsection{Order Learning via Entropy Maximization}
\label{Subsection: order learning}

Recall the random vector $\V=(V_1,\dots,V_d)$ generated by the model $\mathcal M=(\G, \{f_i\}_i, \{\varepsilon_i\}_i)$. In this part, we present our method for learning a variable order $\pi\sim\G$. The method proceeds iteratively: at each iteration, it identifies a sink node among the remaining variables and then removes it, where a sink node is a node with no children. Repeating this procedure for \(d\) iterations yields an order over all variables, and the overall algorithm can be implemented in time polynomial in $d$. The key challenge is therefore how to reliably identify a sink node in each iteration.

To this end, we design an information-theoretic criterion for identifying sink nodes, whose soundness is guaranteed under \autoref{Assumption: OLEM}.
Notably, this assumption imposes no restrictions on the specific form of structural functions $f_i$ or the type of noise distributions, thus providing flexibility across a wide range of data-generating models.
For example, the assumption is naturally satisfied when noise variables have equal information entropy.

\begin{assumption}
    Given any random vector $\V$ generated by the model $\mathcal M=(\G, \{f_i\}_i, \{\varepsilon_i\}_i)$, for any $\pi\sim\G,~m\in[d]$, and $i,j\in\pi[1:m]$ such that $i$ is a sink in $\G^{\pi[1:m]}$, while $j$ is not, there holds
    \begin{align*}
        I(V_j; \Des_j^{\pi[1:m]}\given\W_j^{\pi[1:m]}) > h(\varepsilon_j) - h(\varepsilon_i),
    \end{align*}
    where $d$ denotes the number of nodes in $\G$, $h(\cdot)$ denotes the information entropy, and $I(\cdot;~\cdot)$ denotes the mutual information. $\Des_j\triangleq\{V_l~|~l~\text{is descendant of}~j~\text{in}~\G\}$ denotes the descendants of $V_j$, and $\W_j\triangleq\{V_1,\dots,V_d\}\setminus(\{V_j\}\cup\Des_j)$ denotes the non-descendants of $V_j$. The superscript $\pi[1:m]$ means taking only the first $m$ indices in $\pi$, \emph{i.e.}, $\G^{\pi[1:m]}$ denotes the induced subgraph of $\G$ over the node set $\pi[1:m]\triangleq\{\pi(i)\given 1\leq i\leq m\}$, and $\V^{\pi[1:m]}\triangleq\{V_j\given j\in\pi[1:m]\},~\Des_j^{\pi[1:m]}\triangleq\Des_j\cap\V^{\pi[1:m]},~\W_j^{\pi[1:m]}\triangleq\W_j\cap\V^{\pi[1:m]}$.
    \label{Assumption: OLEM}
\end{assumption}

\begin{remark}    
    From an information-theoretic view, \autoref{Assumption: OLEM} means that, for each non-sink node $j$, the variable $V_j$ passes to its descendants a significant amount of information exceeding the noise information gap $h(\varepsilon_j) - h(\varepsilon_i)$, disregarding other information sources $\W_j^{\pi[1:m]}$.
    This is sensible and mild in practice, since the mutual information term is non-negative, and thus the condition holds whenever $h(\varepsilon_j) < h(\varepsilon_i)$.
    Additionally, from a communication standpoint, \autoref{Assumption: OLEM} lower bounds the conditional channel capacity between a non-sink variable and its descendants.
    \label{Remark: OLEM assumption}
\end{remark}

Under \autoref{Assumption: OLEM}, we present the following proposition, which establishes a criterion for identifying the sink node and serves as the foundation for our proposed order-learning method.

\begin{proposition}
    Under \autoref{Assumption: OLEM}, for any $\pi\sim\G$ and $m\in[d]$, it holds that
    \begin{align*}
        i^* = \mathop{\arg\max}_{i\in\pi[1:m]}~h(V_i\given \V_{-i}^{\pi[1:m]})~\text{implies}~i^*\text{ is a sink in }\G^{\pi[1:m]},
    \end{align*}
    where $\V_{-i}^{\pi[1:m]}\triangleq\{V_j\given j\in\pi[1:m],~j\neq i\}$.
    \label{Prop: sink discrimination iteratively}
\end{proposition}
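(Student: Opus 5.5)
Fix $\pi\sim\G$ and $m$, and write $\V'=\V^{\pi[1:m]}$. Since $\pi[1:m]$ is a prefix of an order, it is ancestrally closed in $\G$: every parent of a node in $\pi[1:m]$ lies in $\pi[1:m]$. Hence $\V'$ is generated by the restricted model on $\G^{\pi[1:m]}$ with the same $f_i$ and $\varepsilon_i$. The plan is to compute $h(V_i\given \V'_{-i})$ exactly for sinks and bound it for non-sinks, and then compare the two using \autoref{Assumption: OLEM}. The result follows by contradiction: if $i^*$ were not a sink, some sink $i$ (one exists in any nonempty DAG) would attain a strictly larger value.

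\textbf{Sink case.} Let $i$ be a sink in $\G^{\pi[1:m]}$. Then $\V'_{-i}$ contains $\Pa_i$ and contains no descendant of $i$. Its elements are functions of noises other than $\varepsilon_i$, so $\varepsilon_i\perp \V'_{-i}$. Since $V_i=f_i(\Pa_i)+\varepsilon_i$, translation invariance of differential entropy given the conditioning set yields
\[
h(V_i\given \V'_{-i})=h(\varepsilon_i\given \V'_{-i})=h(\varepsilon_i).
\]

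\textbf{Non-sink case.} Let $j$ be a non-sink. Split $\V'_{-j}=\Des_j^{\pi[1:m]}\cup\W_j^{\pi[1:m]}$, which is valid because every node in $\pi[1:m]$ other than $j$ is either a descendant or a non-descendant of $j$. By the chain rule,
\[
h(V_j\given \V'_{-j}) = h(V_j\given \W_j^{\pi[1:m]}) - I(V_j;\Des_j^{\pi[1:m]}\given \W_j^{\pi[1:m]}).
\]
Now $\W_j^{\pi[1:m]}\supseteq\Pa_j$, and $\W_j^{\pi[1:m]}$ consists of non-descendants of $j$. Each of these is a function of noises of its own ancestors, which exclude $j$, so $\varepsilon_j\perp\W_j^{\pi[1:m]}$. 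The same translation argument then gives $h(V_j\given \W_j^{\pi[1:m]})=h(\varepsilon_j)$. Therefore
\[
h(V_j\given \V'_{-j})=h(\varepsilon_j)-I(V_j;\Des_j^{\pi[1:m]}\given\W_j^{\pi[1:m]}).
\]
\autoref{Assumption: OLEM} says exactly that this is $< h(\varepsilon_i)=h(V_i\given\V'_{-i})$ for every sink $i$. So no non-sink can be the argmax, and hence $i^*$ is a sink.

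\textbf{Main obstacle.} The main point of care is the independence claims. Non-descendants in the full graph $\G$ do not depend on $\varepsilon_j$, which holds because each variable is a deterministic function of the noises of its ancestors and itself. Restricting to the prefix does not change descendant relations among the retained nodes, since the prefix is ancestrally closed. There are also technical conditions: the differential entropies involved should be finite, and the translation identity $h(g(\mathbf U)+\varepsilon\given\mathbf U)=h(\varepsilon\given\mathbf U)$ requires $\mathbf U$ to determine the shift. I would state these explicitly as regularity assumptions.
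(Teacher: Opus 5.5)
Your proof is correct and follows essentially the same route as the paper. For sinks you show $h(V_i\given\V_{-i}^{\pi[1:m]})=h(\varepsilon_i)$; for non-sinks you use the chain-rule split $h(V_j\given\W_j^{\pi[1:m]})-I(V_j;\Des_j^{\pi[1:m]}\given\W_j^{\pi[1:m]})$ with $h(V_j\given\W_j^{\pi[1:m]})=h(\varepsilon_j)$, and then conclude via \autoref{Assumption: OLEM}. The only difference is that you justify the entropy identities directly through noise independence, translation invariance and the ancestral closure of the prefix, whereas the paper simply invokes the Markov property.
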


\begin{remark}
    \autoref{Prop: sink discrimination iteratively} implies that the sink node can be identified via a procedure of conditional entropy maximization. Intuitively, for a sink node $i$, the conditional entropy reduces to the noise entropy, \emph{i.e.}, $h(V_i \given \V_{-i}^{\pi[1:m]}) = h(\varepsilon_i)$. By contrast, for a non-sink node $j$, conditioning on its non-descendants already yields $h(V_j \given \W_j^{\pi[1:m]}) = h(\varepsilon_j)$; conditioning on the remaining variables further reduces this quantity. Provided this reduction exceeds the noise entropy gap $h(\varepsilon_j) - h(\varepsilon_i)$, it follows that $h(V_j \given \V_{-j}^{\pi[1:m]}) < h(V_i \given \V_{-i}^{\pi[1:m]})$, thereby justifying the identification of the sink node by the process of conditional entropy maximization.
\end{remark}

Leveraging \autoref{Prop: sink discrimination iteratively}, we propose the order-learning strategy outlined in \autoref{Alg: OLEM}. This method maintains a set of remaining nodes $\bLambda$, and iteratively identifies the sink node $i^*$ by:
\begin{align*}
    i^*=\mathop{\arg\max}_{i\in\bLambda}~h(V_i \given \V_{-i}^\bLambda),
\end{align*}
where $\V_{-i}^\bLambda \triangleq \{V_j \given j\in\bLambda,~j\neq i\}$. Once identified, $i^*$ is removed from $\bLambda$, and then the process repeats, until there are no remaining nodes, \emph{i.e.} $\bLambda=\emptyset$.

\autoref{Alg: OLEM} operationalizes the information-theoretic principles of sink-node identification established in \autoref{Prop: sink discrimination iteratively}. Additionally, explicit knowledge of the joint distribution $p$ is not required, as \autoref{Alg: OLEM} relies solely on computing the conditional entropy terms.

\setlength{\intextsep}{1pt}
\begin{wrapfigure}[11]{r}{0.55\textwidth}
    \begin{minipage}{\linewidth}
        \begin{algorithm}[H]
        \captionsetup{type=algorithm}
        \caption{OLEM (Order Learning via conditional Entropy Maximization)}
        \label{Alg: OLEM}
        \begin{algorithmic}[1]
        \STATE {\bfseries Input:} Joint distribution $p$ with dimensionality $d$
        \STATE {\bfseries Output:} Order $\pi$
        \STATE Initialize $\bLambda\leftarrow\{1,2,\dots,d\},~\pi=(~)$
        \FOR{$t=1$ {\bfseries to} $d$}
            \STATE Compute $h(\V_{-i}^\bLambda)$ from $p$ for each $i\in\bLambda$
            \STATE$i^* \leftarrow \mathop{\arg\max}_{i\in\bLambda}~h(V_i\given\V_{-i}^\bLambda)$
            \STATE$\pi\leftarrow(i^*, \pi)$ and $\bLambda\leftarrow\bLambda\setminus\{i^*\}$
        \ENDFOR
        \end{algorithmic}
        \end{algorithm}
    \end{minipage}
\end{wrapfigure}

In practice, \autoref{Alg: OLEM} is straightforward to implement, since each iteration requires only a conditional entropy maximization step. Furthermore, the method is compatible with various entropy estimators \citep{entropy_estimate_kl(1nn), entropy_estimate_ensemble, entropy_estimate_knn}. For efficient implementation, we observe that $h(V_i \given \V_{-i}^\bLambda) = h(\V^\bLambda) - h(\V_{-i}^\bLambda)$. Since the joint entropy $h(\V^\bLambda)$ is constant with respect to $i$, the optimization problem simplifies to computing the quantity $h(\V_{-i}^\bLambda)$ and then $i^*=\mathop{\arg\min}_{i\in\bLambda}~h(\V_{-i}^\bLambda)$.

We formally establish the soundness guarantee that \autoref{Alg: OLEM} correctly learns a true order $\pi\sim\G$ under \autoref{Assumption: OLEM}, which is stated in \autoref{Theorem: OLEM}.
\begin{theorem}
    Let $\mathcal{M}=(\G, \{f_i\}_i, \{\varepsilon_i\}_i)$ be a model satisfying \autoref{Assumption: OLEM}, and let $\V$ denote the random vector generated by $\mathcal{M}$. Given the joint distribution $p$ of $\V$ as input, \autoref{Alg: OLEM} returns an order $\pi\sim\G$.
    \label{Theorem: OLEM}
\end{theorem}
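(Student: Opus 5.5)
The plan is an induction over the iterations of \autoref{Alg: OLEM}, using \autoref{Prop: sink discrimination iteratively} at each step. The invariant I will maintain is the following: at the start of iteration $t$, the remaining set $\bLambda$ equals $\pi'[1:m]$ for some order $\pi'\sim\G$, where $m=d-t+1$. Moreover, the partial output $\pi$ built so far is exactly the tail $(\pi'(m+1),\dots,\pi'(d))$ of that same order.

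\textbf{Base case.} At $t=1$ we have $\bLambda=[d]$ and $\pi=()$. Since $\G$ is a DAG, it has a topological order $\pi'$, and $\pi'[1:d]=[d]$, so the invariant holds.

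\textbf{Inductive step.} Suppose $\bLambda=\pi'[1:m]$ with $\pi'\sim\G$. Then the algorithm's choice $i^*=\arg\max_{i\in\bLambda}h(V_i\given\V^{\bLambda}_{-i})$ is precisely the maximizer in \autoref{Prop: sink discrimination iteratively} with $\pi=\pi'$. Hence $i^*$ is a sink in $\G^{\pi'[1:m]}$. I then build a new order $\pi''$ from $\pi'$ by moving $i^*$ to position $m$, leaving the relative order of $\pi'[1:m]\setminus\{i^*\}$ unchanged and keeping $\pi'(m+1),\dots,\pi'(d)$ fixed.

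I must check that $\pi''\sim\G$, i.e.\ that every descendant relation is respected. Pairs that do not involve $i^*$ keep their relative positions, so they stay consistent. For $i^*$ itself there are two cases.
\begin{itemize}
\item \emph{Descendants of $i^*$.} Since $i^*$ is a sink in the induced subgraph on $\pi'[1:m]$, no child of $i^*$ lies in $\pi'[1:m]$. Any descendant of $i^*$ inside $\pi'[1:m]$ would be reached by a directed path whose first edge goes to a child $c$. That child satisfies $\pi'^{-1}(c)>\pi'^{-1}(i^*)$, and it is not in the prefix, so $\pi'^{-1}(c)>m$. Every later node on the path comes after $c$ in $\pi'$, hence also has position $>m$. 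So all descendants of $i^*$ lie in positions $>m$, which still come after position $m$ in $\pi''$.
\item \emph{Ancestors of $i^*$.} These precede $i^*$ in $\pi'$ and so lie in $\pi'[1:m]$. Moving $i^*$ later only preserves ancestor-before-descendant.
\end{itemize}
Thus $\pi''\sim\G$. Setting $\bLambda\leftarrow\bLambda\setminus\{i^*\}=\pi''[1:m-1]$ and $\pi\leftarrow(i^*,\pi)=(\pi''(m),\dots,\pi''(d))$ restores the invariant with $\pi''$ in place of $\pi'$.

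\textbf{Conclusion.} After $d$ iterations the invariant says the output $\pi$ equals $\pi''[1:d]$ for some $\pi''\sim\G$, i.e.\ it is an order of $\G$.

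\textbf{Remaining details.} The step I expect to need most care is the re-ordering argument: \autoref{Prop: sink discrimination iteratively} is stated for prefixes of true orders, so the invariant has to be phrased with a possibly changing $\pi'$. This is exactly why I carry an existentially quantified order rather than a fixed one. I will also remark that ties in the $\arg\max$ are harmless, since the proposition applies to any maximizer. Finally, the algorithm's computation via $\arg\min_i h(\V^{\bLambda}_{-i})$ is equivalent to the conditional-entropy maximization, because $h(\V^{\bLambda})$ does not depend on $i$.
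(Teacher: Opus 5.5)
Your proposal is correct and follows essentially the same induction as the paper. Both arguments maintain that the partial output is the tail of some order $\pi'\sim\G$, so the remaining set $\bLambda$ is a prefix $\pi'[1:m]$, and both apply \autoref{Prop: sink discrimination iteratively} at each step. The only difference is that you justify explicitly why the extended sequence is again the tail of a valid order, by moving the sink $i^*$ to position $m$ and checking ancestor and descendant pairs; the paper simply asserts this step.
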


Moreover, we position OLEM within a broader landscape by comparing its theoretical scope against relevant algorithms. As summarized in \autoref{Table: comparison to LISTEN, CaPS}, existing algorithms often rely on specific constraints regarding the functional form (e.g., linear) or the noise distribution type (e.g., Gaussian). In contrast, OLEM achieves soundness guarantees across all listed categories without assuming a specific form of structural functions or the type of noise distributions.

\begin{wrapfigure}[9]{r}{0.58\textwidth}
    \vspace{-\intextsep}
    \begin{minipage}{\linewidth}
        \captionsetup{type=table}
        \caption{Availability of theoretical guarantees for the listed algorithms under different settings. ``L'' denotes ``Linear'', ``G'' denotes ``Gaussian'', and ``n'' denotes ``non-''.}
        \label{Table: comparison to LISTEN, CaPS}
        \begin{small}
        \resizebox{\linewidth}{!}{
        \begin{tabular}{c cccccc}
        \toprule
        & CAM & LISTEN & SCORE & DiffAN & CaPS & OLEM \\
        \midrule
        L \& G & No & Yes & No & No & Yes & Yes\\
        L \& nG & No & Yes & No & No & No & Yes\\
        nL \& G & Yes & No & Yes & Yes & Yes & Yes\\
        nL \& nG & No & No & No & Yes & No & Yes\\
        \bottomrule
        \end{tabular}
        }
        \end{small}
    \end{minipage}
    \vspace{-\intextsep}
\end{wrapfigure}

A natural question arises: is this broad adaptivity achieved at the expense of requiring stricter assumptions under simpler settings? For instance, in the linear Gaussian case, does OLEM impose significantly stronger assumptions than specialized methods like LISTEN \citep{aistats/LISTEN2018} and CaPS \citep{nips/CaPS2024}?
To facilitate the comparison, we review the assumptions of LISTEN and CaPS.

\begin{assumption}[LISTEN \citep{aistats/LISTEN2018}]
    Given any random vector $\V$ generated by the model $\mathcal M=(\G, \{f_i\}_i, \{\varepsilon_i\}_i)$ under the linear setting:
    \begin{align*}
        f_i(\Pa_i)=\sum_{V_j\in\Pa_i}w_{ji}V_j,
    \end{align*}
    for any $\pi\sim\G,~m\in[d]$, and for any $i,j\in\pi[1:m]$, if $i$ is a sink in $\G^{\pi[1:m]}$, while $j$ is not, then
    \begin{align*}
        \sigma_i^{-2} < \sigma_j^{-2} + \sum_{l: V_j\in\Pa_l^{\pi[1:m]}}\sigma_l^{-2}w_{jl}^2,
    \end{align*}
    where $d$ denotes the number of nodes in $\G$, and $\sigma_i^2$ is variance of $\varepsilon_i$ with $\E[\varepsilon_i]=0$ for any $i\in[d]$.
    \label{Assumption: LISTEN}
\end{assumption}

\begin{assumption}[CaPS \citep{nips/CaPS2024}]
    \label{Assumption: CaPS}
    Given any random vector $\V$ generated by the model $\mathcal M=(\G, \{f_i\}_i, \{\varepsilon_i\}_i)$ under the Gaussian setting: $\varepsilon_i\sim\mathcal N(0,\sigma_i^2)$, for any $\pi\sim\G$, denoting by $d$ the number of nodes in $\G$, at least one of the following holds.
    \begin{enumerate}[(1)]
        \item For any $i,j\in[d]$, $\pi^{-1}(i) < \pi^{-1}(j)$ implies $\sigma_i < \sigma_j$.
        \item For any $m\in[d]$, and for any $j\in\pi[1:m]$, if $j$ is non-sink in $\G^{\pi[1:m]}$, then
        \begin{align*}
            \sigma_{\min}^{-2} < \sigma_j^{-2} + \sum_{l: V_j\in\Pa_l^{\pi[1:m]}}\sigma_l^{-2}\E[(\frac{\partial f_l(\pa_l)}{\partial v_j})^2].
        \end{align*}
    \end{enumerate}
\end{assumption}

We present \autoref{Prop: Assumption coincide} to demonstrate that, when restricted to linear Gaussian cases, the assumption of OLEM reduces to that of LISTEN and is no stronger than that of CaPS. This confirms the adaptivity of OLEM does not come at the cost of stricter assumptions under settings such as linear Gaussian.

\begin{proposition}
    Under the linear and Gaussian setting, \autoref{Assumption: OLEM} reduces to \autoref{Assumption: LISTEN}, and \autoref{Assumption: CaPS} implies \autoref{Assumption: OLEM}.
    \label{Prop: Assumption coincide}
\end{proposition}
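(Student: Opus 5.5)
The whole argument rests on one identity that holds for any additive model, followed by an explicit Gaussian calculation. Fix $\pi\sim\G$, $m\in[d]$, and nodes $i,j\in\pi[1:m]$ with $i$ a sink of $\G^{\pi[1:m]}$ and $j$ not.

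\emph{Step 1 (the prefix is a sub-SEM).} Because $\pi[1:m]$ is a prefix of an order, it is ancestrally closed. Hence $\V^{\pi[1:m]}$ is generated by the restricted model on $\G^{\pi[1:m]}$ with the same $f_l$ and $\varepsilon_l$.

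\emph{Step 2 (rewrite the mutual information).} The node set $\pi[1:m]$ splits as $\{j\}\cup\Des_j^{\pi[1:m]}\cup\W_j^{\pi[1:m]}$. Therefore
\begin{align*}
I(V_j;\Des_j^{\pi[1:m]}\given\W_j^{\pi[1:m]}) = h(V_j\given \W_j^{\pi[1:m]}) - h(V_j\given \V_{-j}^{\pi[1:m]}).
\end{align*}
Next, $\Pa_j\subseteq\W_j^{\pi[1:m]}$, and $\varepsilon_j$ is independent of all non-descendants of $j$. Conditioning on $\W_j^{\pi[1:m]}$ therefore fixes $f_j(\Pa_j)$ and leaves only the translation by $\varepsilon_j$, so $h(V_j\given\W_j^{\pi[1:m]})=h(\varepsilon_j)$. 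The same reasoning applies to the sink $i$: its conditioning set $\V_{-i}^{\pi[1:m]}$ consists of non-descendants of $i$ and contains $\Pa_i$, so $h(V_i\given \V_{-i}^{\pi[1:m]})=h(\varepsilon_i)$.

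\emph{Step 3 (equivalent form of \autoref{Assumption: OLEM}).} Substituting Step 2, \autoref{Assumption: OLEM} for the tuple $(\pi,m,i,j)$ is equivalent to
\begin{align*}
h(V_j\given \V_{-j}^{\pi[1:m]}) < h(\varepsilon_i).
\end{align*}
This condition depends on $\pi$ and $m$ only through the set $\pi[1:m]$.

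\emph{Step 4 (linear Gaussian computation).} In the linear Gaussian case, $\V^{\pi[1:m]}$ is jointly Gaussian with precision matrix
\begin{align*}
\Theta=(I-W)D^{-1}(I-W)^\top,
\end{align*}
where $W$ is the restricted weight matrix and $D=\mathrm{diag}(\sigma_l^2)$. Its diagonal entries are
\begin{align*}
\Theta_{jj}=\sigma_j^{-2}+\sum_{l:V_j\in\Pa_l^{\pi[1:m]}}\sigma_l^{-2}w_{jl}^2 .
\end{align*}
The Gaussian conditional entropy is $h(V_j\given\V_{-j}^{\pi[1:m]})=\tfrac12\log(2\pi\e/\Theta_{jj})$, and $h(\varepsilon_i)=\tfrac12\log(2\pi\e\sigma_i^2)$. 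So the inequality of Step 3 reads $\sigma_i^{-2}<\Theta_{jj}$. This is exactly \autoref{Assumption: LISTEN}, which proves the first claim.

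\emph{Step 5 (CaPS implies OLEM).} Under linearity, $\E[(\partial f_l/\partial v_j)^2]=w_{jl}^2$, so clause (2) of \autoref{Assumption: CaPS} reads $\sigma_{\min}^{-2}<\Theta_{jj}$. Since $\sigma_i\ge\sigma_{\min}$, we get $\sigma_i^{-2}\le\sigma_{\min}^{-2}<\Theta_{jj}$, as required.

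The main obstacle is clause (1), because the given $\pi$ may place $i$ before $j$, and then clause (1) only yields $\sigma_i<\sigma_j$, which is the wrong direction. I would resolve this using the fact from Step 3 that the target inequality depends only on the set $\pi[1:m]$. Since $i$ is a sink of $\G^{\pi[1:m]}$, moving $i$ to position $m$ (keeping the other elements of the prefix and the whole suffix in place) gives another order $\pi'\sim\G$ with $\pi'[1:m]=\pi[1:m]$ and $j$ before $i$. Now apply \autoref{Assumption: CaPS} to $\pi'$:
\begin{enumerate}[(a)]
\item If clause (1) holds for $\pi'$, then $\sigma_j<\sigma_i$, so $\sigma_i^{-2}<\sigma_j^{-2}\le\Theta_{jj}$.
\item If clause (2) holds for $\pi'$, the argument of Step 5 applies.
\end{enumerate}
In either case \autoref{Assumption: OLEM} holds for $(\pi,m,i,j)$, which proves the second claim.
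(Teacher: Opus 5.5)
Your proof is correct. For the first claim it is essentially the paper's computation. The paper expresses the mutual information through joint entropies, uses $|\bSigma|$ and the cofactor $M_{jj}$ to obtain $\tfrac12\ln(\sigma_j^2\Omega_{jj})$, and cites Prop.~2 of LISTEN for $\Omega_{jj}$. You instead pass through $h(V_j\given\V_{-j}^{\pi[1:m]})=\tfrac12\log(2\pi\e/\Theta_{jj})$ and derive $\Theta_{jj}$ yourself from $(I-W)D^{-1}(I-W)^\top$. Your Step 1 also makes explicit the restriction to the ancestrally closed prefix, which the paper uses only tacitly. So the paper's version is shorter by citation, and yours is self-contained.

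The genuine difference is in clause (1) of \autoref{Assumption: CaPS}, and here your route is the more careful one. The paper asserts that clause (1) for the given $\pi$ directly yields $\sigma_i>\sigma_j$. That implicitly assumes $j$ precedes $i$ in $\pi$, which need not hold. For instance, take an isolated node $1$, an edge $2\to 3$, $\pi=(1,2,3)$, $m=3$, $i=1$, $j=2$; clause (1) then gives $\sigma_1<\sigma_2$, the wrong direction. Your argument closes exactly this case. The OLEM and LISTEN conditions depend only on the set $\pi[1:m]$, and moving the sink $i$ to position $m$ gives another order $\pi'\sim\G$ with the same prefix set and $j$ before $i$. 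Applying \autoref{Assumption: CaPS} to $\pi'$ is legitimate because that assumption quantifies over all orders. The conclusion of the proposition therefore stands, but your proof supplies the step that the paper's proof skips.
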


\subsection{Order-based Decision-Making}
\label{Subsection: order-based decision-making}

In this part, we propose an order-based decision-making method for addressing the AUF problem \eqref{eq: AUF}. At first, we leverage conditional flow models \citep{conditional_flows_ardizzone2019, conditional_flows_winkler2019} to construct a sampler for the post-decision distribution $p_{\Y\given\X=\x,~\Z_\A\altereq\z_\A}(\cdot\given\X=\x,~\Z_\A\altereq\z_\A)$ based on the learned order. The sampler further enables approximating the objective function in \eqref{eq: AUF}.

We here detail the construction of the sampler. As presented in \autoref{Prop: post-order distribution factorize}, given the order, the joint post-decision distribution $p_{\V\given\X=\x,~\Z_\A\altereq\z_\A}$ is identifiable from the observational distribution. Further, direct estimation of the marginal distribution $p_{\Y\given\X=\x,~\Z_\A\altereq\z_\A}$ typically involves an intractable integral form. To avoid this, we conduct a two-step strategy: (1) we construct a sampler for the joint distribution $p_{\V\given\X=\x,~\Z_\A\altereq\z_\A}(\cdot\given\X=\x,~\Z_\A\altereq\z_\A)$ and (2) we extract the components corresponding to $\Y$ to obtain a sampler for $p_{\Y\given\X=\x,~\Z_\A\altereq\z_\A}(\cdot\given\X=\x,~\Z_\A\altereq\z_\A)$.

\setlength{\intextsep}{1pt}
\begin{wrapfigure}[17]{r}{0.58\textwidth}
    \begin{minipage}{\linewidth}
        \begin{algorithm}[H]
        \captionsetup{type=algorithm}
   \caption{Construction of the sampler for the joint post-decision distribution}
   \label{Alg: sampler_construction}
\begin{algorithmic}[1]
    \STATE {\bfseries Input:} Order $\pi$, conditional flow models $\{T_i\}_{i=1}^d$, noises $\{N_i\}_{i=1}^d$, context $\x$, decision $\z_\A$
    \STATE {\bfseries Output:} Constructed sampler $T$
    \STATE Initialize $T$ as an empty random vector $(~)$
    \FOR{$i=\pi(1)$ {\bfseries to} $\pi(d)$}
        \IF{$V_i$ is contextual, \emph{i.e.} $\exists j,~V_i=X_j$}
        \STATE $T := (T, x_j)$
        \ELSIF{$V_i$ is alterable, \emph{i.e.} $\exists j,~V_i=Z_{a_j}$}
        \STATE $T := (T, z_{a_j})$
        \ELSE
        \STATE $T := (T, T_i(N_i,T))$
        \ENDIF
    \ENDFOR
\end{algorithmic}
        \end{algorithm}
    \end{minipage}
\end{wrapfigure}

The core challenge lies in step (1), for which we propose the following implementation. Given an order $\pi$, we associate each variable $V_i$ with a conditional flow model $T_i(N_i, \pred_i(\pi))$ \citep{conditional_flows_winkler2019}, which transforms a standard Gaussian noise $N_i \sim \mathcal{N}(0,1)$ and the values of the predecessors $\pred_i(\pi)$ into a variable following the conditional distribution of $V_i$, \emph{i.e.} $T_i(N_i, \pred_i(\pi)) \sim p_{V_i\given \pred_i(\pi)}(\cdot\given \pred_i(\pi))$. Subsequently, we employ \autoref{Alg: sampler_construction} to construct a joint sampler $T(\N,\x,\z_\A)$ by composing the individual $T_i$-s according to $\pi$, where $\N\triangleq(N_i)_i$ denotes the joint noise vector. Based on \autoref{Prop: post-order distribution factorize} and the aforementioned property of $T_i$-s, this construction ensures that $T(\N,\x,\z_\A)\sim p_{\V\given\X=\x,~\Z_\A\altereq\z_\A}(\cdot\given\X=\x,~\Z_\A\altereq\z_\A)$ when $\N\sim\mathcal N(\mathbf 0, \mathbf I)$. Notably, the structure of the constructed sampler remains valid for varying contexts $\x$ and decisions $\z_\A$ without the need for reconstruction.

Completing the procedure with step (2), we extract the components of the joint sampler $T(\N,\x,\z_{\A})$ corresponding to the target variables $\Y$. This yields the marginal sampler $T_{\Y}(\N,\x,\z_{\A})\sim p_{\Y\given\X=\x,~\Z_\A\altereq\z_\A}(\cdot\given\X=\x,~\Z_\A\altereq\z_\A)$, where $\N\sim\mathcal N(\mathbf 0, \mathbf I)$.

To learn the conditional flow models $T_i$-s, various methods are available. In this paper we employ the conditional invertible neural network (cINN) \citep{conditional_flows_ardizzone2019} to fit each $T_i$ by minimizing the empirical KL divergence between the output and the target distribution.

Leveraging the sampler, we can approximate the objective of problem \eqref{eq: AUF} and subsequently solve the problem. Specifically, the sampler allows us to express the objective as $\Pr\{T_{\Y}(\N,\x,\z_{\A})\in\S\}$. Consequently, for any candidate decision $\z_{\A}$, we can estimate the objective by sampling, \emph{i.e.}
\begin{align*}
    &\Pr\{\Y\in\S\given\X=\x,~\Z_{\A}\altereq\z_{\A}\}\\
    =&~\Pr\{T_{\Y}(\N,\x,\z_{\A})\in\S\}\\
    \approx&~\frac1n\sum_{i=1}^n \I[T_{\Y}(\N^i,\x,\z_{\A})\in\S],
\end{align*} 
where $\{\N^i\}_{i=1}^n$ are i.i.d. samples of $\N$, and $\I[\cdot]$ denotes the indicator function. 

This estimation allows for solving problem \eqref{eq: AUF} using zeroth-order optimization algorithms, such as Bayesian Optimization \citep{icml/BO_constrained2014}, yet these methods are often computationally inefficient. To address this, we adopt the surrogate objective introduced by \citet{aaai/Qin2025gradient}. In this manner, we convert the problem into a differentiable optimization task, which can be solved via efficient gradient-based methods. The converted problem is given by the following:
\begin{align}
\begin{split}
    \min_{\z_{\A}}\quad &\frac1n\sum_{i=1}^n\Vert T_{\Y}(\N^i,\x,\z_{\A})-\c\Vert_1\\
    \text{s.t.}\quad &~\z_{\A}\in \mathbf\Delta_{\A},
    \label{eq: AUF_reduced}
\end{split}
\end{align}
where $\c$ denotes the Chebyshev center of $\S$, and is computable by linear programming \citep{convex_opt_boyd2004}. Since $T_{\Y}$ is instantiated using differentiable neural networks, the objective in \eqref{eq: AUF_reduced} is fully differentiable with respect to $\z_{\A}$. This allows us to solve the problem using gradient-based optimizers; in our implementation, we employ the Adam optimizer \citep{iclr/adam2015}.

\setlength{\intextsep}{1pt}
\begin{wrapfigure}[17]{r}{0.62\textwidth}
    \begin{minipage}{\linewidth}
    \begin{algorithm}[H]
    \captionsetup{type=algorithm}
    \caption{OLEM-Rh}
    \label{Alg: OLEM-Rh}
        \begin{algorithmic}[1]
        \STATE {\bfseries Input:} Desired region $\S$ parameterized by $\M,\d$, context $\x$, alterable variable indices $\A$, domain $\mathbf\Delta_\A$, observational data $\D=\{\V^i\}_i$
        \STATE {\bfseries Output:} Decision $\z_{\A}^\star$
        \STATE Learn an order $\pi$ via \autoref{Alg: OLEM}, where conditional entropy terms are estimated with $\D$
        \FOR{$i=1$ {\bfseries to} $d$}
            \STATE Fit the conditional flow model $T_i$ using a cINN trained with $\pi$ and $\D$
        \ENDFOR
        \STATE Construct $T$ via \autoref{Alg: sampler_construction}, and further obtain $T_{\Y}$
        \STATE Compute $\c$ from $\M,\d$, and sample $\{\N^i\}_{i=1}^n\stackrel{\text{i.i.d.}}{\sim}\mathcal N(\mathbf 0, \mathbf I)$
        \STATE Obtain $\z_{\A}^\star$ by solving \eqref{eq: AUF_reduced} using the Adam optimizer with $T_\Y$, $\c$, $\{\N^i\}_{i=1}^n$, $\x$, $\A$, $\mathbf\Delta_\A$
        \end{algorithmic}
    \end{algorithm}
    \end{minipage}
    \vspace{-\intextsep}
\end{wrapfigure}

By integrating the proposed decision-making procedure with the order learned via \autoref{Alg: OLEM}, we establish the method OLEM-Rh, detailed in \autoref{Alg: OLEM-Rh}. This method leverages observational data to learn the variable order and subsequently optimize decisions for the AUF problem. A key strength of OLEM-Rh is its modular architecture, which allows several components to be substituted with alternative techniques, providing strong flexibility for the method. For instance, the specific gradient-based optimizer in line 9 can be varied. More fundamentally, the optimization strategy (lines 8-9) can be replaced by directly solving the primal AUF problem (\ref{eq: AUF}) using zeroth-order (black-box) methods, such as Bayesian Optimization \citep{icml/BO_constrained2014}. Similarly, the conditional flow models $T_i$-s (line 5) can be instantiated using other generative models, provided that they satisfy the differentiability requirements of the chosen optimization method.

\textbf{Discussion.} The order structure leveraged in this work aligns with the concept of causal order. To the best of our knowledge, the concept of causal ordering traces back to the 1950s, most notably in the seminal work of \citet{causal_order1953}.
Simon formalized the asymmetric relations in econometrics, distinguishing directional influence from symmetric algebraic equations. This foundational framework later transcended social sciences, inspiring research in qualitative physics~\citep{IWASAKI198663} and artificial intelligence~\citep{causal_book_Pearl2000}. In recent years, causal ordering has seen a resurgence, primarily serving as an intermediate representation for learning causal graphs~\citep{uai/Teyssier2005, CAM2014, aistats/LISTEN2018, icml/SCORE2022, iclr/DiffAN2023, nips/CaPS2024}. In this work, however, we argue that the order structure itself holds intrinsic value for decision-making, specifically for AUF. We show that the topological properties of causal order are directly beneficial for these problems and propose the first order-based method tailored for AUF.

\section{Experiments}

In this section, we conduct experiments to validate the effectiveness of the proposed methods. In \autoref{Subsection: expt_order}, we evaluate the performance of OLEM for learning the order structure. In \autoref{Subsection: expt_decision}, we then assess OLEM-Rh on AUF decision-making tasks.

\subsection{Order Learning}
\label{Subsection: expt_order}

We compare OLEM with five representative baselines, including LISTEN \citep{aistats/LISTEN2018} (primarily designed for linear settings), DiffAN \citep{iclr/DiffAN2023} (for nonlinear settings), CAM \citep{CAM2014} and SCORE \citep{icml/SCORE2022} (tailored for nonlinear Gaussian cases), and CaPS \citep{nips/CaPS2024} (for linear or nonlinear Gaussian cases).

We assess the quality of the learned order using the order divergence (DIV) \citep{icml/SCORE2022}. Since most baselines return a graph besides an order, we additionally report the Structural Hamming Distance (SHD) and the Structural Intervention Distance (SID) \citep{sid_shd_buhlmann2015} of the returned graphs. To compute graph-based metrics for OLEM, we first convert the learned order into a DAG by applying the CAM pruning procedure \citep{CAM2014} with cutoff $0.001$, and then evaluate SHD and SID on the resulting graph.

We conduct experiments on both synthetic and real-world data. For synthetic evaluation, we generate datasets by varying the dimensionality $d$, edge density $p$, linearity $r$, and noise type. Specifically, we set $d \in \{5,10,20\}$ and generate true graphs using the Erd{\"o}s-R{\'e}nyi model \citep{ERgraph} with edge density $p \in \{0.3,0.5,0.8\}$. The linearity of each structural function is controlled by $r \in \{0,0.25,0.5,0.75,1\}$: each $f_i$ is linear with probability $r$ and nonlinear otherwise. Linear coefficients are sampled uniformly from $[-1,-0.25]\cup[0.25,1]$ to keep them bounded away from zero, while nonlinear functions are generated by sampling from Gaussian processes with a unit bandwidth RBF kernel. Noise terms are drawn from beta, exponential, or Gaussian distributions, with distribution parameters sampled uniformly from $[0.75,1.25]$. The observational sample size is $2000$. To reduce instance-specific variance, we adopt a hierarchical evaluation protocol: For each synthetic setting, we report the mean and standard deviation over 5 independent epochs; within each epoch, we repeatedly generate tasks with different ground-truth models and average the metrics.

For real-world evaluation, we use the Sachs dataset \citep{sachs_dataset2005}, a protein signaling network with $11$ nodes and $17$ edges, containing $853$ samples. We randomly shuffle the dataset in each run and report the mean and standard deviation of each metric over $10$ independent runs.

\begin{figure*}[t]
\centering
\begin{subfigure}[]{0.49\linewidth}
    \includegraphics[width=\linewidth]{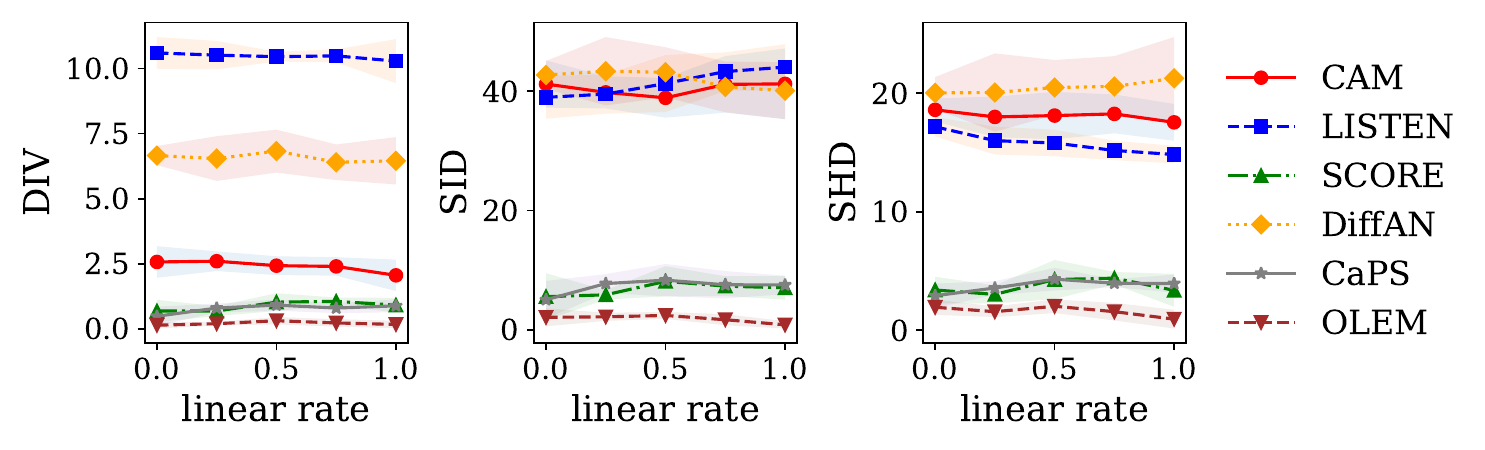}
    \caption{$d=10,~p=0.3$ with beta noise}
    \label{Fig: beta_10_0d3_inmainbody}
\end{subfigure}
\hfill
\begin{subfigure}[]{0.49\linewidth}
    \includegraphics[width=\linewidth]{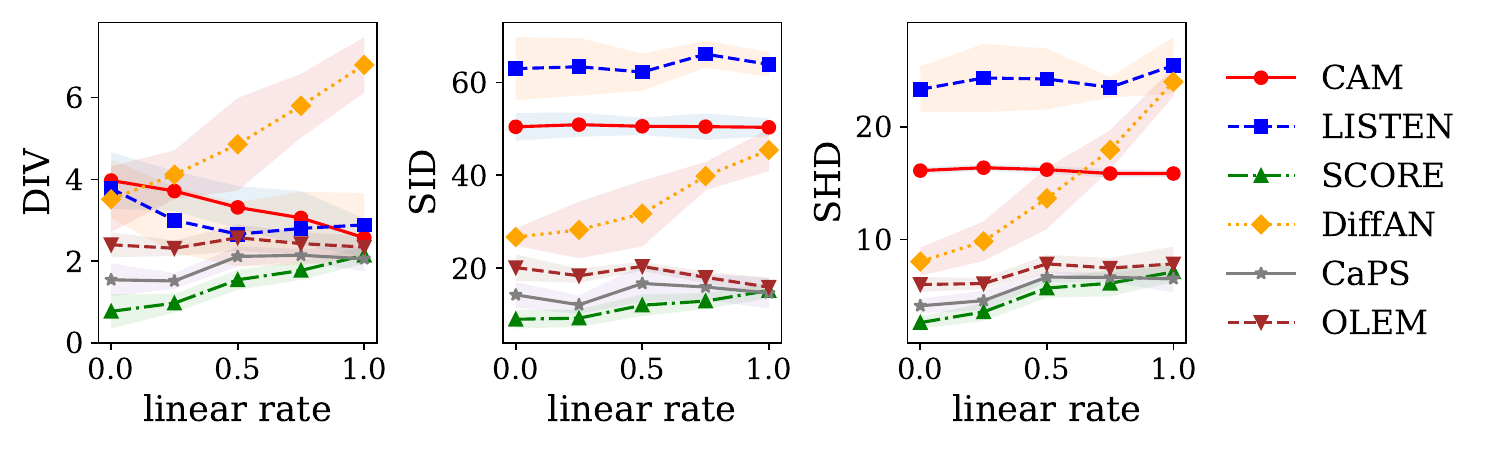}
    \caption{$d=10,~p=0.3$ with Gaussian noise}
    \label{Fig: gauss_10_0d3_inmainbody}
\end{subfigure}
\caption{Results on the synthetic datasets with beta or Gaussian noise ($d=10, p=0.3$)}
\label{Fig: beta_gauss_10_0d3}
\end{figure*}

\autoref{Fig: beta_gauss_10_0d3} reports experimental results on synthetic datasets with beta or Gaussian noise ($d=10, p=0.3$). Under the Gaussian noise setting, CaPS and SCORE achieve the strongest performance, which aligns with their specified Gaussian noise assumptions.

\begin{wrapfigure}[13]{r}{0.45\textwidth}
    \begin{minipage}{\linewidth}
        \captionsetup{type=table}
        \caption{Results on the real dataset Sachs. For each metric, the best results are bolded.}
        \label{Table: causal experiment real data}
        \begin{tabular}{l rrr} 
        \toprule
        Dataset & \multicolumn{3}{c}{Sachs}\\
        \midrule
        Method & \makecell[c]{DIV$\downarrow$} & \makecell[c]{SHD$\downarrow$} & \makecell[c]{SID$\downarrow$}\\
        \midrule
        CAM & 8.0\textpm0.0 & 17.0\textpm0.0 & \myfontb{45.0\textpm0.0}\\
        LISTEN & 8.0\textpm0.0 & 30.8\textpm6.7 & 54.1\textpm4.1\\
        SCORE & 8.0\textpm0.0 & 17.0\textpm0.0 & \myfontb{45.0\textpm0.0}\\
        DiffAN & \myfontb{7.2\textpm1.9} & 17.8\textpm2.9 & 49.7\textpm4.4\\
        CaPS & 9.0\textpm0.0 & \myfontb{16.0\textpm0.0} & 46.0\textpm0.0\\
        OLEM & 8.0\textpm0.0 & \myfontb{16.0\textpm0.0} & \myfontb{45.0\textpm0.0}\\
        \bottomrule
        \end{tabular}
    \end{minipage}
\end{wrapfigure}

Under the non-Gaussian noise setting, OLEM consistently performs better than these Gaussian-oriented methods and shows a clear margin over the remaining baselines. This demonstrates that OLEM adapts well to diverse data-generating distributions rather than benefiting from a specific noise model. Results for additional dimensionalities, edge densities, and noise types are provided in the appendix.

Table \ref{Table: causal experiment real data} summarizes the results on the Sachs dataset. OLEM achieves the best SHD and SID, while ranking second to DiffAN in terms of DIV. Notably, although DiffAN attains the best mean DIV, it also exhibits the worst mean SID and large variance across data shuffles. This suggests that DiffAN may capture certain ordering-related statistics, whereas OLEM yields a more robust and reliable recovery of the underlying structure.

\subsection{AUF Decision-Making}
\label{Subsection: expt_decision}

We now evaluate OLEM-Rh on the AUF decision-making problem and focus on three questions:
\begin{enumerate}
    \item [(1)] Whether AUF decision-making benefits more from learning an explicit graph and using Grad-Rh \citep{aaai/Qin2025gradient} or from learning an order and using OLEM-Rh, \emph{i.e.}, whether an explicit graph is necessary for AUF decision-making?
    \item [(2)] Whether OLEM is a crucial component for the order-learning stage or OLEM-Rh remains competitive when coupled with alternative order-learning methods?
    \item [(3)] How close OLEM-Rh can get to oracle decision-makers such as Grad-Rh and QWZ23 \citep{nips/QWZ2023} that assume access to the true graph?
\end{enumerate}

Decision quality is measured by the success probability of avoiding the undesired future. Specifically, for each recommended decision, we simulate the decision on the ground-truth model for $1000$ independent trials, and measure the resulting empirical success rate as the metric.

We conduct experiments on both synthetic and real-world data. For synthetic evaluation, we consider both linear and nonlinear data-generating processes with beta and exponential noise. For each synthetic setting, we construct $20$ tasks with distinct true models, alterable variable sets, decision domains, and desired regions. We compute for each task the average success probability over $50$ decision rounds, and then report the mean and standard deviation across the $20$ tasks.

The true graphs are generated using the Erd{\"o}s-R{\'e}nyi model \citep{ERgraph} with edge probability $p=0.3$. In linear settings, edge weights are sampled uniformly from $[-1,1]$. In nonlinear settings, structural functions are instantiated using three-layer MLPs with sigmoid activations, where weights are sampled uniformly from $[-2,-0.5]\cup[0.5,2]$. Noise terms are generated following \autoref{Subsection: expt_order}.
The set of alterable variables is the union of randomly selected ancestors for each $Y_i$ (where each ancestor is selected with probability 0.5).
For each alterable variable $Z_{a_i}$, the decision domain is $[\mu-2\sigma,\mu+2\sigma]$, and for each outcome $Y_i$, the desired region is $[\mu-\sigma,\mu+\sigma]$, where $\mu$ and $\sigma$ denote the empirical mean and standard deviation of the corresponding variable, respectively. We set the dimensionality to $15$ in linear settings and $12$ in nonlinear settings.

For real-world evaluation, we use the Bermuda dataset \citep{bermuda2017, bermuda2018}, which contains environmental measurements from the Bermuda region. Following \citep{nips/QWZ2023}, we employ their fitted linear Gaussian simulators to generate $10$ pseudo-real datasets. We then follow the same evaluation protocol as in the synthetic experiments: For each dataset, we run $50$ independent decision rounds and compute the average success probability; we then report the mean and standard deviation across the $10$ datasets.

\begin{table*}[t]
\caption{Success probability on four synthetic settings and Bermuda. Methods in the first part learn the order or graph from data, while methods in the second part are provided with the true graph. Within each part, results with the highest mean are bolded. The sample size is $1000$.}
\label{Table: rehearsal experiment (est. graph)}
\begin{center}
\begin{tabular}{l ccccc}
\toprule
 & SynLrBeta & SynLrExp & SynNlrBeta & SynNlrExp & Bermuda\\
\midrule
OLEM-Rh & \myfontb{.772\textpm.130} & \myfontb{.785\textpm.089} & \myfontb{.820\textpm.173} & \myfontb{.844\textpm.085} & \myfontb{.668\textpm.006}\\
Grad-Rh w/OLEM+prun & .738\textpm.196 &.761\textpm.193 &.816\textpm.155 & .840\textpm.082 & \myfontb{.668\textpm.006} \\
Grad-Rh w/CaPS & .712\textpm.210 &.705\textpm.220 &.795\textpm.213 & .839\textpm.089 & .575\textpm.107 \\
Grad-Rh w/NOTEARS & .453\textpm.264 &.396\textpm.247 &.795\textpm.165 & .831\textpm.078 & .322\textpm.021 \\
Grad-Rh w/GES & .654\textpm.223 &.590\textpm.270 &.786\textpm.222 & .829\textpm.081 & .279\textpm.144 \\
OLEM-Rh w/CaPS & .752\textpm.148 &.750\textpm.121 &.802\textpm.180 & .808\textpm.126 & \myfontb{.668\textpm.005} \\
\midrule
Grad-Rh (true graph) & \myfontb{.793\textpm.113} & \myfontb{.828\textpm.075} & \myfontb{.843\textpm.176} & \myfontb{.851\textpm.071} & \myfontb{.669\textpm.007} \\
QWZ23 (true graph) & .754\textpm.096 & .768\textpm.205 & .640\textpm.284 & .804\textpm.201 & \myfontb{.669\textpm.008} \\
\bottomrule
\end{tabular}
\end{center}
\vskip -0.16in
\end{table*}

To assess whether an explicit graph is necessary for AUF decisions, we compare OLEM-Rh to pipelines that first learn a graph from observational data and then apply Grad-Rh, which was originally formulated under access to the true graph \citep{aaai/Qin2025gradient}. Concretely, we pair Grad-Rh with several structure learning algorithms, including OLEM+prun (where we apply CAM pruning \citep{CAM2014} with cutoff 0.001 to the order learned by OLEM to obtain a graph), CaPS \citep{nips/CaPS2024}, NOTEARS \citep{nips/NOTEARS2018}, and GES \citep{GES2002}. As shown in \autoref{Table: rehearsal experiment (est. graph)}, across all the five experimental settings, OLEM-Rh consistently outperforms Grad-Rh equipped with a learned graph. This supports the intuition that learning the graph structure introduces edge-level errors that can be harmful to downstream decision-making, and that an explicit graph can be unnecessary for strong AUF decision-making performance.

We next examine whether OLEM is a key ingredient in the rehearsal framework by replacing the order-learning component with alternative methods (e.g., CaPS). As shown in \autoref{Table: rehearsal experiment (est. graph)}, OLEM-Rh with OLEM achieves higher success probabilities than the same rehearsal procedure coupled with alternative order learners. This indicates that OLEM is particularly well-suited for the rehearsal framework, likely due to its robustness under non-Gaussian settings.

Finally, we compare OLEM-Rh with oracle methods that have access to the true graph, namely Grad-Rh and QWZ23 \citep{nips/QWZ2023}. In these comparisons, the true graph is provided to Grad-Rh and QWZ23 but remains unknown to OLEM-Rh. As expected, the oracle method Grad-Rh provides an empirical upper bound. Nevertheless, OLEM-Rh achieves performance comparable to Grad-Rh across all test cases, and outperforms QWZ23 on the four synthetic settings, where the superiority of OLEM-Rh to QWZ23 is due to the rigid linear Gaussian assumption of QWZ23 (which limits its efficacy in non-Gaussian environments, while ensures its performance on the linear Gaussian Bermuda dataset).

\begin{wrapfigure}{r}{0.5\textwidth}
    \vspace{-\intextsep}
    \begin{minipage}{0.5\textwidth}
        \captionsetup{type=figure}
        \begin{subfigure}[]{0.49\linewidth}
            \includegraphics[width=\linewidth]{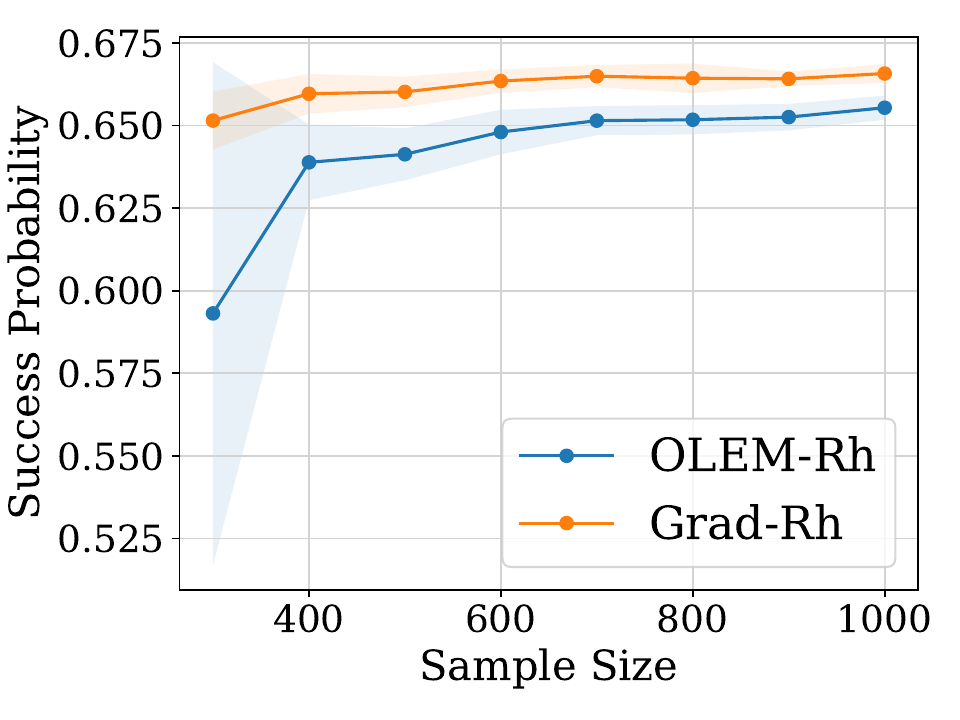}
            \caption{}
            \label{Fig: performance_compare}
        \end{subfigure}
        \begin{subfigure}[]{0.49\linewidth}
            \includegraphics[width=\linewidth]{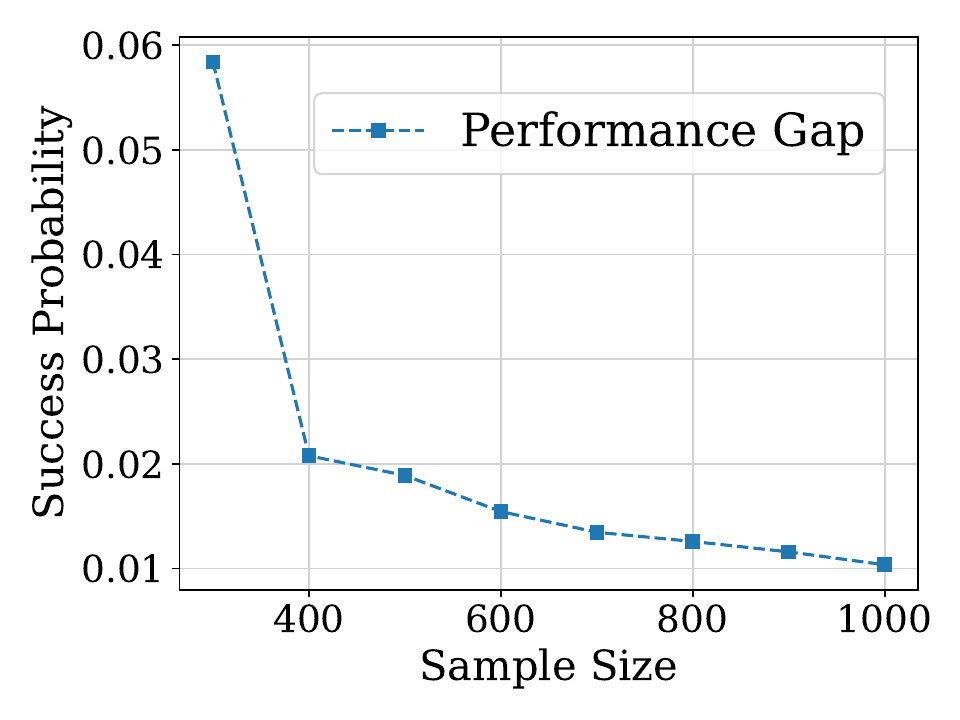}
            \caption{}
            \label{Fig: performance_gap}
        \end{subfigure}
        \caption{(a) Relation between the success probability and the sample size on the Bermuda dataset. (b) Relation between the performance gap and the sample size on the Bermuda dataset.}
        \label{Fig: expt_n_samples}
    \end{minipage}
    \vspace{-\intextsep}
\end{wrapfigure}

We further study the effect of sample size on Bermuda. \autoref{Fig: performance_compare} shows absolute performance as a function of sample size, and \autoref{Fig: performance_gap} reports the gap $p_{\text{Grad-Rh}} - p_{\text{OLEM-Rh}}$. Both methods improve as sample size increases, reflecting more accurate estimation of variable relations. Moreover, the gap narrows with more data: the initial rapid decrease suggests that OLEM-Rh quickly identifies a high-quality order, while the subsequent slower decay indicates that, once the order stabilizes, performance converges as the conditional flow models become the dominant factor in capturing the underlying relations, diminishing the marginal advantage of the graph structure.

\section{Conclusion}

In this paper, we demonstrate that the graph structure can be unnecessary for decision-making in AUF problems, and establish the sufficiency of the variable order for identifying the influence of decisions from observational data.
We propose the first order-based rehearsal learning method OLEM-Rh to learn and exploit the order structure for AUF decision-making. Notably, several modules of this method can be substituted with alternative techniques, providing strong flexibility. Experimental results show that OLEM-Rh performs comparably to oracle baselines that are provided with access to the true graph, and outperforms pipelines that require learning a graph, validating the sufficiency of the order structure for AUF decision-making.

\bibliographystyle{plainnat}
\bibliography{main.bib}

\newpage
\appendix

\section{Proofs}

\label{Appendix: proofs}

\subsection{Proof of \autoref{Prop: post-order distribution factorize}}

\theoremstyle{plain}
\newtheorem*{proposition31}{\textbf{Proposition 3.1}}
\begin{proposition31}
    Given a random vector $\V$ generated by the model $\mathcal M=(\G, \{f_i\}_i, \{\varepsilon_i\}_i)$, suppose $\V$ is separated into $\X,\Z,\Y$ with $\Z_\A\subseteq\Z$ according to the AUF setting described in $\autoref{Section: AUF formulation}$. 
    Given an order $\pi\sim\G$, it holds that
    \begin{align*}
        p(\v\given\X=\x,~\Z_\A\altereq\z_\A) = \delta_{\X=\x}(\v)~\delta_{\Z_\A=\z_\A}(\v)\prod_{i}p(v_i\given \pred_i(\pi)).
    \end{align*}
    where the product is taken over the $V_i$-s that are not contained in $\X$ or $\Z_\A$. $\Pred_i(\pi)\triangleq\{V_j\given \pi^{-1}(j)<\pi^{-1}(i)\}$ denotes the predecessors of $V_i$ according to $\pi$, and $\pred_i(\pi)$ denotes an assignment to $\Pred_i(\pi)$. $\delta_{\X=\x}(\v)$ is a Dirac function satisfying: (1) $\delta_{\X=\x}(\v)=0$ if the sub-vector of $\v$ corresponding to $\X$ is not equal to $\x$, and (2) $\int\delta_{\X=\x}(\v)\d\v=1$. The Dirac function $\delta_{\Z_\A=\z_\A}(\v)$ is similarly defined.
\end{proposition31}

\begin{proof}
    Recall from \autoref{Section: AUF formulation} that any variable in $\Z$ or $\Y$ cannot be a parent of a contextual variable $X_i$. This gives
    \begin{align*}
        p(\v\given\X=\x,~\Z_\A\altereq\z_\A) = p(\v\given\X\altereq\x,~\Z_\A\altereq\z_\A).
    \end{align*}
    The model $\mathcal M=(\G, \{f_i\}_i, \{\varepsilon_i\}_i)$ satisfies the Markov property, \emph{i.e.} each $V_i$ is independent of its non-descendants given $\Pa_i$ \citep{causal_book_Pearl2000}. This gives $p(\v)=\prod_{i\in[d]}p(v_i\given\pa_i)$. Further, when altering $\X$ and $\Z_\A$, for any unaltered $V_i$, the relation $V_i=f_i(\Pa_i)+\varepsilon_i$ remains unchanged. Therefore it holds that
    \begin{align*}
        p(\v\given\X\altereq\x,~\Z_\A\altereq\z_\A) = \delta_{\X=\x}(\v)\delta_{\Z_\A=\z_\A}(\v)\prod_{i}p(v_i\given \pa_i),
    \end{align*}
    where the product is taken over the $V_i$-s that are not contained in $\X$ or $\Z_\A$. Also by the Markov property we have $p(v_i\given\pa_i)=p(v_i\given\pred_i(\pi))$, since $\Pred_i(\pi)$ includes all parents of $V_i$ and contains no descendants of $V_i$. Therefore we obtain
    \begin{align*}
        p(\v\given\X\altereq\x,~\Z_\A\altereq\z_\A) = \delta_{\X=\x}(\v)\delta_{\Z_\A=\z_\A}(\v)\prod_{i}p(v_i\given \pred_i(\pi)).
    \end{align*}
    This finishes the proof.
\end{proof}

\subsection{Proof of \autoref{Prop: sink discrimination iteratively}}

\newtheorem*{proposition32}{\textbf{Proposition 3.2}}
\begin{proposition32}
    Under \autoref{Assumption: OLEM}, for any $\pi\sim\G$ and $m\in[d]$, it holds that
    \begin{align*}
        i^* = \mathop{\arg\max}_{i\in\pi[1:m]}~h(V_i\given \V_{-i}^{\pi[1:m]})~\text{implies}~i^*\text{ is a sink in }\G^{\pi[1:m]},
    \end{align*}
    where $\V_{-i}^{\pi[1:m]}\triangleq\{V_j\given j\in\pi[1:m],~j\neq i\}$.
\end{proposition32}

\begin{proof}
    For any sink $i$ in $\G^{\pi[1:m]}$, since $\pi$ is an order of $\G$, it is guaranteed that $\V_{-i}^{\pi[1:m]}$ includes all parents of $V_i$ in $\G$. Also, $V_i$ has no descendants. By the Markov property we have
    \begin{align*}
        h(V_i\given \V_{-i}^{\pi[1:m]}) = h(V_i\given\Pa_i) = h(\varepsilon_i).
    \end{align*}
    For any non-sink $j$ in $\G^{\pi[1:m]}$, it holds that
    \begin{align*}
        h(V_j\given \V_{-j}^{\pi[1:m]}) = h(V_j\given \Des_j^{\pi[1:m]},\W_j^{\pi[1:m]}) = h(V_j\given\W_j^{\pi[1:m]}) - I(V_j;\Des_j^{\pi[1:m]}\given\W_j^{\pi[1:m]}).
    \end{align*}
    Note that $\W_j^{\pi[1:m]}$ includes all parents of $V_j$ in $\G$ and contains no descendants of $V_j$ in $\G$. This gives $h(V_j\given\W_j^{\pi[1:m]}) = h(V_j\given\Pa_j) = h(\varepsilon_j)$, and it holds that
    \begin{align*}
        h(V_i\given \V_{-i}^{\pi[1:m]}) - h(V_j\given \V_{-j}^{\pi[1:m]}) = h(\varepsilon_i) - h(\varepsilon_j) + I(V_j;\Des_j^{\pi[1:m]}\given\W_j^{\pi[1:m]}) > 0.
    \end{align*}
    Therefore, $i^* = \mathop{\arg\max}_{i\in\pi[1:m]}~h(V_i\given \V_{-i}^{\pi[1:m]})$ implies $i^*$ is a sink node in $\G^{\pi[1:m]}$.
\end{proof}

\subsection{Proof of \autoref{Theorem: OLEM}}

\newtheorem*{theorem31}{\textbf{Theorem 3.1}}
\begin{theorem31}
    Let $\mathcal{M}=(\G, \{f_i\}_i, \{\varepsilon_i\}_i)$ be a model satisfying \autoref{Assumption: OLEM}, and let $\V$ denote the random vector generated by $\mathcal{M}$. Given the joint distribution $p$ of $\V$ as input, \autoref{Alg: OLEM} returns an order $\pi\sim\G$.
\end{theorem31}

\begin{proof}
    Denote by $\pi$ the current order and by $\bLambda$ the current remaining nodes maintained by \autoref{Alg: OLEM}. We prove by induction that for any $t\in[d]$, after the $t$-th iteration, $\pi$ is the tail of some order of $\G$ (that is, $\pi(1),\dots,\pi(t)$ are the last $t$ elements of some order of $\G$).

    For $t=1$, before the first iteration, $\bLambda=[d]=\pi[1:d]$. Plugging $m=d$ into \autoref{Prop: sink discrimination iteratively}
    gives that, the node $i_1$ found in the first iteration is a sink in $\G$. Therefore, after the first iteration, $\pi=(i_1)$ is the tail of some order of $\G$.

    Suppose after the $t$-th iteration $(t\in[d-1])$, $\pi$ is the tail of some order $\pi_0\sim\G$. We then prove this for $t+1$. Since before the $t+1$-th iteration (after the $t$-th iteration), $\pi$ is the tail of $\pi_0$, it holds that the current remaining nodes $\bLambda=\pi_0[1:d-t]$. Plugging $m=d-t$ into \autoref{Prop: sink discrimination iteratively}
    gives that, the node $i_{t+1}$ found in the $t+1$-th iteration is a sink in $\G^\bLambda$. Therefore, after the $t+1$-th iteration, $\pi := (i_{t+1}, \pi)$ is the tail of some order of $\G$.

    By induction, for any $t\in[d]$, after the $t$-th iteration, $\pi$ is the tail of some order of $\G$. Setting $t=d$ finishes the proof.
\end{proof}

\subsection{Proof of \autoref{Prop: Assumption coincide}}

\begin{lemma}
    For any random vectors $\X, \Y, \Z$, it holds that
    \begin{align*}
        I(\Y;\Z\given\X) = h(\Y\given\X) - h(\X,\Y,\Z) + h(\X,\Z).
    \end{align*}
    \label{Lemma: appendix_Assumption_coincide}
\end{lemma}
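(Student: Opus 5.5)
The statement is \autoref{Lemma: appendix_Assumption_coincide}. My plan is to derive it purely from the definitions of conditional mutual information and conditional differential entropy via the chain rule. I will assume all the relevant (joint) densities exist and that the entropies involved are finite, so that no $\infty-\infty$ expressions occur.

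First, I start from the standard definition
\begin{align*}
I(\Y;\Z\given\X) = h(\Y\given\X) - h(\Y\given\X,\Z).
\end{align*}
This is the same identity already used in the proof of \autoref{Prop: sink discrimination iteratively} with $V_j$, $\Des_j$, $\W_j$ in place of $\Y$, $\Z$, $\X$. If one prefers the integral definition
\begin{align*}
I(\Y;\Z\given\X)=\E\left[\log \frac{p(\Y,\Z\given\X)}{p(\Y\given\X)\,p(\Z\given\X)}\right],
\end{align*}
then splitting the logarithm gives exactly this difference.

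Second, I rewrite the remaining conditional entropy using the chain rule:
\begin{align*}
h(\Y\given\X,\Z)=h(\X,\Y,\Z)-h(\X,\Z).
\end{align*}
This follows from $p(\y\given\x,\z)=p(\x,\y,\z)/p(\x,\z)$ by taking $-\E\log$ of both sides and using linearity of expectation. Substituting it into the first identity yields the claimed formula.

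I do not expect a genuine obstacle. The only care needed is the finiteness and existence assumptions, which justify splitting the expectations. I will state these assumptions explicitly, as they are implicit throughout the paper wherever $h(\cdot)$ is used.
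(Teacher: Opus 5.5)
Your proposal is correct and follows the paper's proof exactly: the paper writes $I(\Y;\Z\given\X) = h(\Y\given\X) - h(\Y\given\X,\Z)$ and then applies the chain rule $h(\Y\given\X,\Z)=h(\X,\Y,\Z)-h(\X,\Z)$ in a single line. Your explicit assumptions that the densities exist and the entropies are finite are a sensible addition that the paper leaves implicit.
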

\begin{proof}
    $I(\Y;\Z\given\X) = h(\Y\given\X) - h(\Y\given\X,\Z) = h(\Y\given\X) - h(\X,\Y,\Z) + h(\X,\Z)$.
\end{proof}

\newtheorem*{proposition33}{\textbf{Proposition 3.3}}
\begin{proposition33}
    Under the linear and Gaussian setting, \autoref{Assumption: OLEM} reduces to \autoref{Assumption: LISTEN}, and \autoref{Assumption: CaPS} implies \autoref{Assumption: OLEM}.
\end{proposition33}

\begin{proof}
    Consider any random vector $\V$ generated by the model $\mathcal M=(\G, \{f_i\}_i, \{\varepsilon_i\}_i)$ under the linear and Gaussian setting:
    \begin{align*}
        f_i(\Pa_i)=\sum_{V_j\in\Pa_i}w_{ji}V_j,\quad\varepsilon_i\sim\mathcal N(0,\sigma_i^2).
    \end{align*}
    For any $\pi\sim\G,~m\in[d]$, and for any $i,j\in\pi[1:m]$ such that $i$ is a sink in $\G^{\pi[1:m]}$, while $j$ is not, we show that the inequality in \autoref{Assumption: OLEM} reduces to that in \autoref{Assumption: LISTEN}.

    Denote by $\bSigma$ the covariance matrix of $\V^{\pi[1:m]}$, by $|\bSigma|$ the determinant of $\bSigma$, and by $M_{jj}$ the determinant of the sub-matrix obtained by removing the row and column corresponding to $V_j$ in $\bSigma$. By \autoref{Lemma: appendix_Assumption_coincide}, it holds that
    \begin{align*}
        I(V_j; \Des_j^{\pi[1:m]}\given\W_j^{\pi[1:m]}) = h(V_j\given\W_j^{\pi[1:m]}) - h(\W_j^{\pi[1:m]}, V_j, \Des_j^{\pi[1:m]}) + h(\W_j^{\pi[1:m]}, \Des_j^{\pi[1:m]}).
    \end{align*}
    From the proof of \autoref{Prop: sink discrimination iteratively}
    we have
    \begin{align*}
        h(V_j\given\W_j^{\pi[1:m]}) = h(\varepsilon_j) = (1/2)\ln(2\pi\e\sigma_j^2).
    \end{align*}
    Also, noting that $\V^{\pi[1:m]} = \W_j^{\pi[1:m]}\cup\{V_j\}\cup\Des_j^{\pi[1:m]}$, we have
    \begin{align*}
        &h(\W_j^{\pi[1:m]}, V_j, \Des_j^{\pi[1:m]}) = (1/2)\ln((2\pi\e)^{m}|\bSigma|),\\
        &h(\W_j^{\pi[1:m]}, \Des_j^{\pi[1:m]}) = (1/2)\ln((2\pi\e)^{m-1}M_{jj}).
    \end{align*}
    Therefore it holds that
    \begin{align*}
        I(V_j; \Des_j^{\pi[1:m]}\given\W_j^{\pi[1:m]}) = \frac12\ln\frac{\sigma_j^2M_{jj}}{|\bSigma|}.
    \end{align*}
    Denote by $\bOmega$ the inverse matrix of $\bSigma$, and by $\bSigma^*$ the adjugate matrix of $\bSigma$. Let $\Omega_{jj}$ denote the diagonal element of $\bOmega$ corresponding to $V_j$, and define $\bSigma^*_{jj}$ similarly. Note that $\bOmega = \bSigma^*/|\bSigma|$. This gives $\Omega_{jj} = \Sigma^*_{jj}/|\bSigma| = M_{jj}/|\bSigma|$. Plugging this into the equation above gives
    \begin{align*}
        I(V_j; \Des_j^{\pi[1:m]}\given\W_j^{\pi[1:m]}) = \frac12\ln(\sigma_j^2\Omega_{jj}).
    \end{align*}
    By Prop.2 of \citet{aistats/LISTEN2018}, $\Omega_{jj} = \sigma_j^{-2} + \sum_{l:V_j\in\Pa_l^{\pi[1:m]}}\sigma_l^{-2}w_{jl}^2$. This gives 
    \begin{align*}
        I(V_j; \Des_j^{\pi[1:m]}\given\W_j^{\pi[1:m]}) > h(\varepsilon_j) - h(\varepsilon_i)\quad&\text{iff}\quad \frac12\ln(\sigma_j^2\sigma_i^{-2}) < \frac12\ln\sigma_j^2\left(\sigma_j^{-2} + \sum_{l:V_j\in\Pa_l^{\pi[1:m]}}\sigma_l^{-2}w_{jl}^2\right)\\
        &\text{iff}\quad\sigma_i^{-2} < \sigma_j^{-2} + \sum_{l:V_j\in\Pa_l^{\pi[1:m]}}\sigma_l^{-2}w_{jl}^2.
    \end{align*}
    Therefore, \autoref{Assumption: OLEM} reduces to \autoref{Assumption: LISTEN} under the linear Gaussian setting.

    Then we prove \autoref{Assumption: CaPS} implies \autoref{Assumption: LISTEN}. For any $\pi\sim\G,~m\in[d]$, suppose (1) of \autoref{Assumption: CaPS} holds, then for any $i,j\in\pi[1:m]$ such that $i$ is a sink in $\G^{\pi[1:m]}$, while $j$ is not, we have $\sigma_i > \sigma_j$, and this gives
    \begin{align*}
        \sigma_i^{-2} < \sigma_j^{-2} \leq \sigma_j^{-2} + \sum_{l:V_j\in\Pa_l^{\pi[1:m]}}\sigma_l^{-2}w_{jl}^2.
    \end{align*}
    Also, suppose (2) of \autoref{Assumption: CaPS} holds, then for any $i,j\in\pi[1:m]$ such that $i$ is a sink in $\G^{\pi[1:m]}$, while $j$ is not, noting that $\frac{\partial f_l(\pa_l)}{\partial v_j}=w_{jl}$, it holds that
    \begin{align*}
        \sigma_i^{-2} \leq \sigma_{\min}^{-2} < \sigma_j^{-2} + \sum_{l:V_j\in\Pa_l^{\pi[1:m]}}\sigma_l^{-2}w_{jl}^2.
    \end{align*}
    Therefore, \autoref{Assumption: CaPS} implies \autoref{Assumption: LISTEN} under the linear Gaussian setting, and it follows that \autoref{Assumption: CaPS} implies \autoref{Assumption: OLEM} under the linear Gaussian setting.
\end{proof}

\section{Experiments}

\label{Appendix: experiments}

All experiments were run on a Nvidia RTX 6000 GPU and two Intel Xeon w7-3455 CPUs.

\subsection{Implementation Details}

\textbf{Order Learning.}
As presented in \autoref{Alg: OLEM}, OLEM recursively learns the order from the joint distribution, where the joint distribution is only used to compute the conditional entropy terms. Since we only have access to observational data in practice, we estimate the required conditional entropy terms using the Kozachenko–Leonenko estimator \citep{entropy_estimate_kl(1nn)}. Notably, OLEM is parameter-free and requires no hyperparameter tuning.

\textbf{AUF Decision-making.}
OLEM-Rh leverages the conditional flow model \citep{conditional_flows_winkler2019} to construct the post-decision distribution sampler, where the flow model is implemented using the conditional invertible neural network (cINN) \citep{conditional_flows_ardizzone2019}. Specifically, the cINN in our experiment consists of 16 sequentially chained blocks, and is trained using the Adam optimizer \citep{iclr/adam2015} with a learning rate of 0.001 and a batch size of 256. During the training process, we employ a 70/30 train-validation split and use the validation set for early stopping. Detailed architectural and training specifications of the cINN follow \citet{FrEIA_ardizzone2018, conditional_flows_ardizzone2019}.

With the constructed sampler, OLEM-Rh solves the problem \eqref{eq: AUF_reduced} using Adam, whose learning rate is set to 0.5 in our experiments. The size $n$ of the samples drawn from the constructed sampler is a hyperparameter, and is fixed at $n=1000$ across all experiments to ensure consistency.

\subsection{Additional Synthetic Results on Order Learning}

We present additional results of the synthetic experiments in \autoref{Subsection: expt_order}. The noise variables follow beta, Gaussian, or exponential distributions. We vary the dimensionality $d\in\{5, 10, 20\}$ and the edge density $p\in\{0.3, 0.5, 0.8\}$. Although the results for $d=10,~p=0.3$ with beta or Gaussian noise are shown in \autoref{Fig: beta_gauss_10_0d3}, we reproduce them here to facilitate direct comparison. The setting $d=20, p=0.8$ is omitted due to the excessive number of edges.

\begin{figure}[H]
	\centering
	\begin{minipage}{0.49\linewidth}
		\centering
		\includegraphics[width=\linewidth]{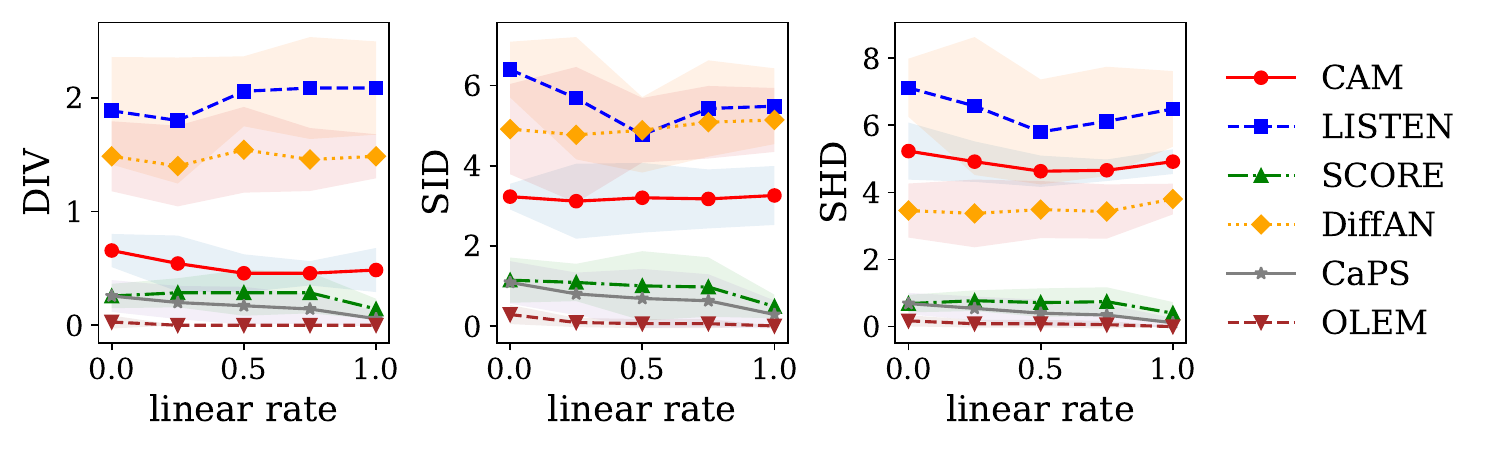}
		\caption{$d=5,~p=0.3$ with beta noise}
		\label{Fig: beta_5_0d3}
	\end{minipage}
	\hfill
	\begin{minipage}{0.49\linewidth}
		\centering
		\includegraphics[width=\linewidth]{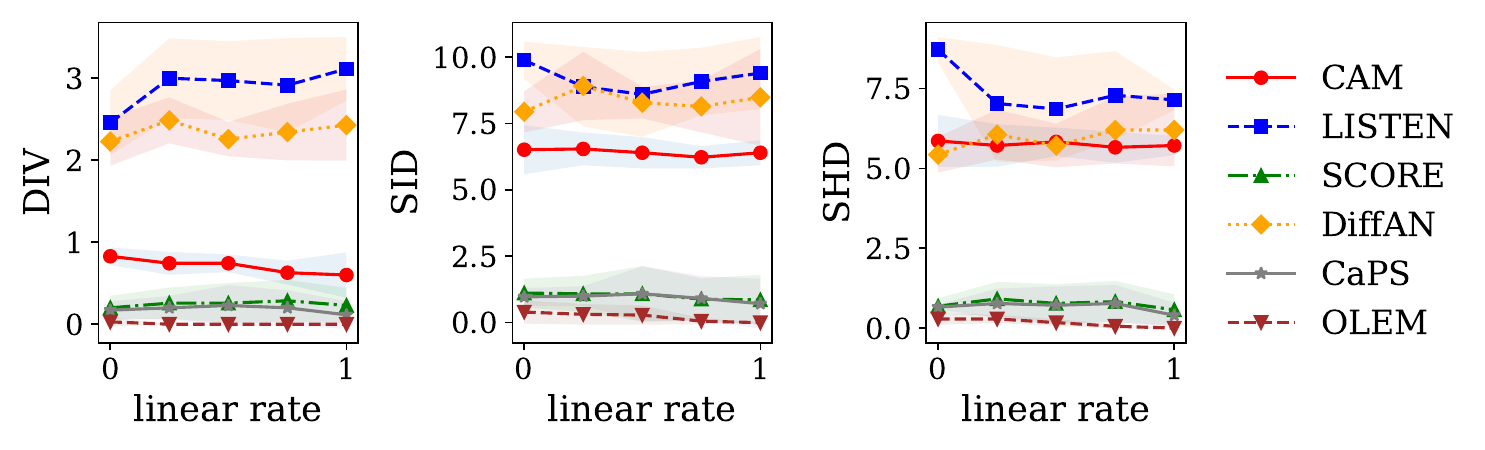}
		\caption{$d=5,~p=0.5$ with beta noise}
		\label{Fig: beta_5_0d5}
	\end{minipage}
\end{figure}

\begin{figure}[H]
	\centering
	\begin{minipage}{0.49\linewidth}
		\centering
		\includegraphics[width=\linewidth]{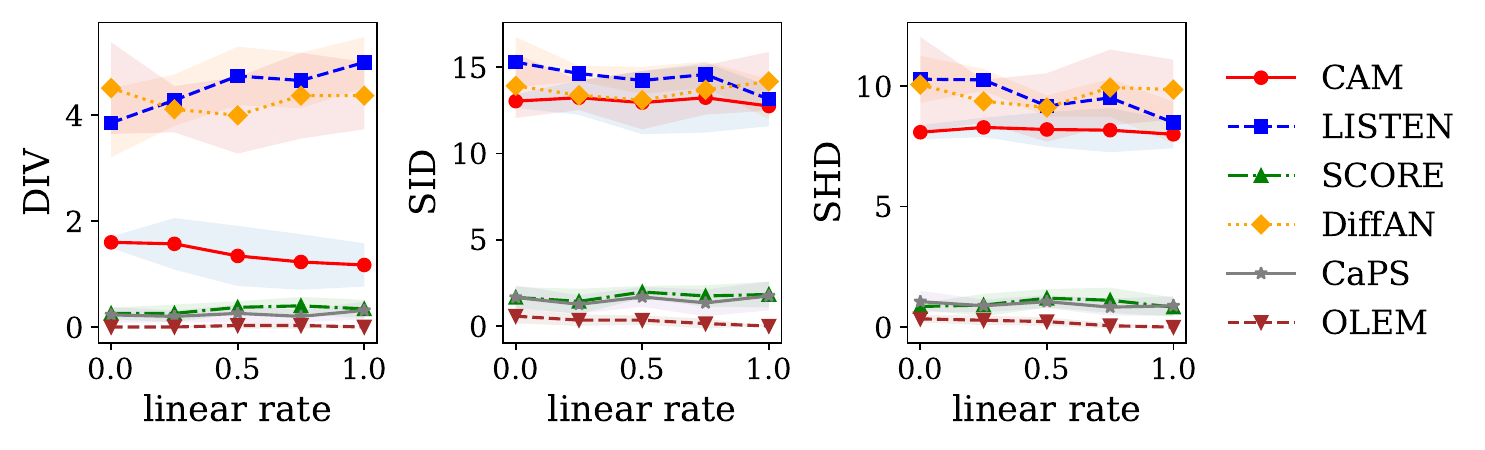}
		\caption{$d=5,~p=0.8$ with beta noise}
		\label{Fig: beta_5_0d8}
	\end{minipage}
	\hfill
	\begin{minipage}{0.49\linewidth}
		\centering
		\includegraphics[width=\linewidth]{figs/beta_10_0d3_tight.pdf}
		\caption{$d=10,~p=0.3$ with beta noise}
		\label{Fig: beta_10_0d3}
	\end{minipage}
\end{figure}

\begin{figure}[H]
	\centering
	\begin{minipage}{0.49\linewidth}
		\centering
		\includegraphics[width=\linewidth]{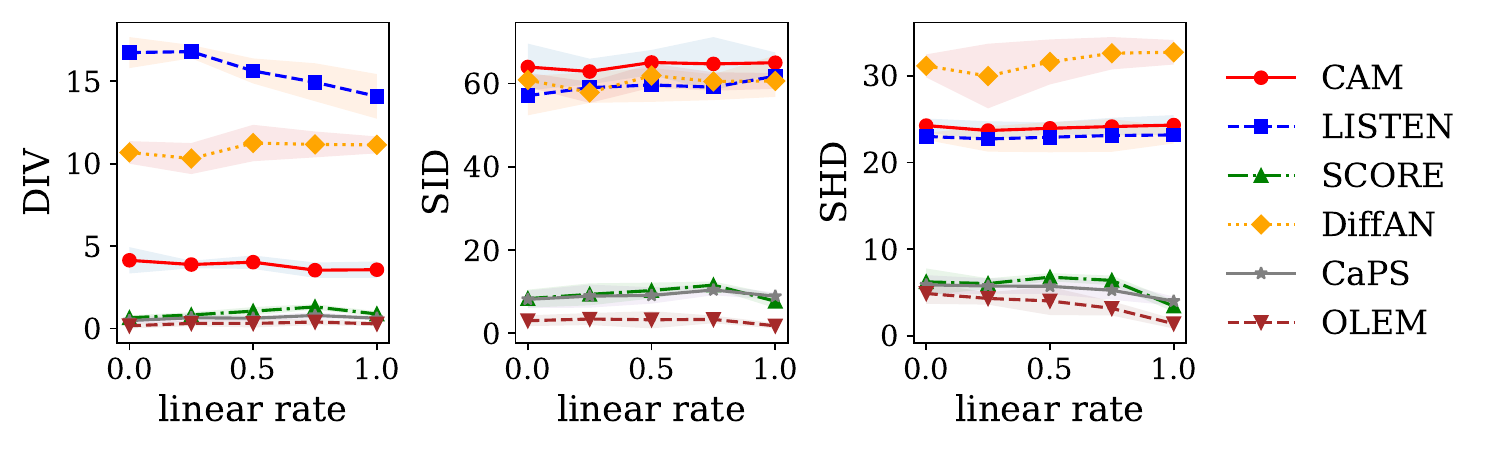}
		\caption{$d=10,~p=0.5$ with beta noise}
		\label{Fig: beta_10_0d5}
	\end{minipage}
	\hfill
	\begin{minipage}{0.49\linewidth}
		\centering
		\includegraphics[width=\linewidth]{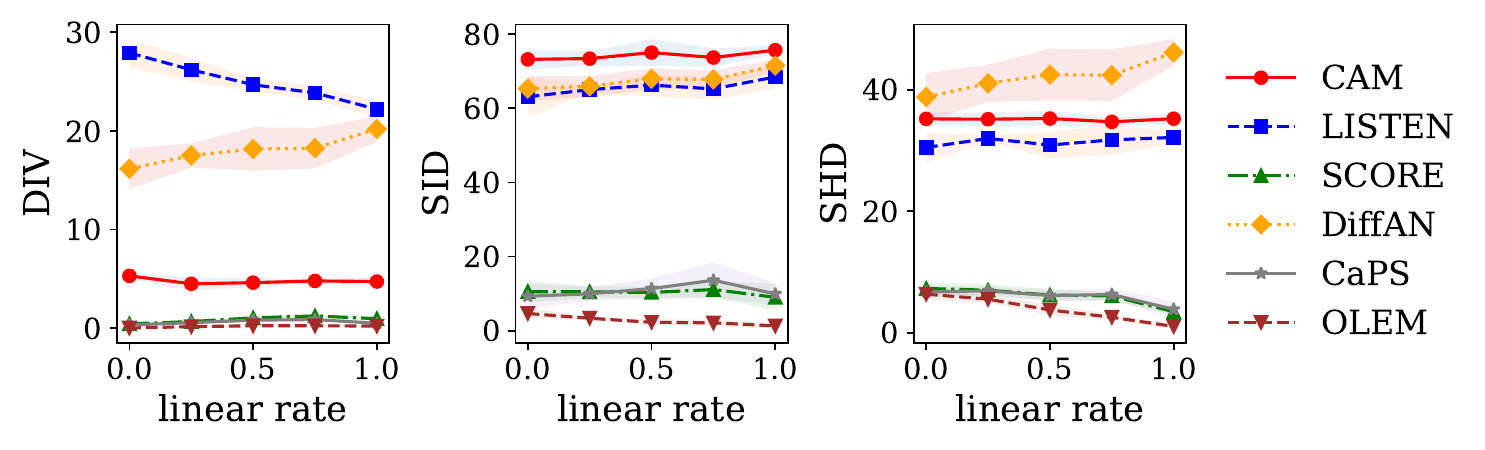}
		\caption{$d=10,~p=0.8$ with beta noise}
		\label{Fig: beta_10_0d8}
	\end{minipage}
\end{figure}

\begin{figure}[H]
	\centering
	\begin{minipage}{0.49\linewidth}
		\centering
		\includegraphics[width=\linewidth]{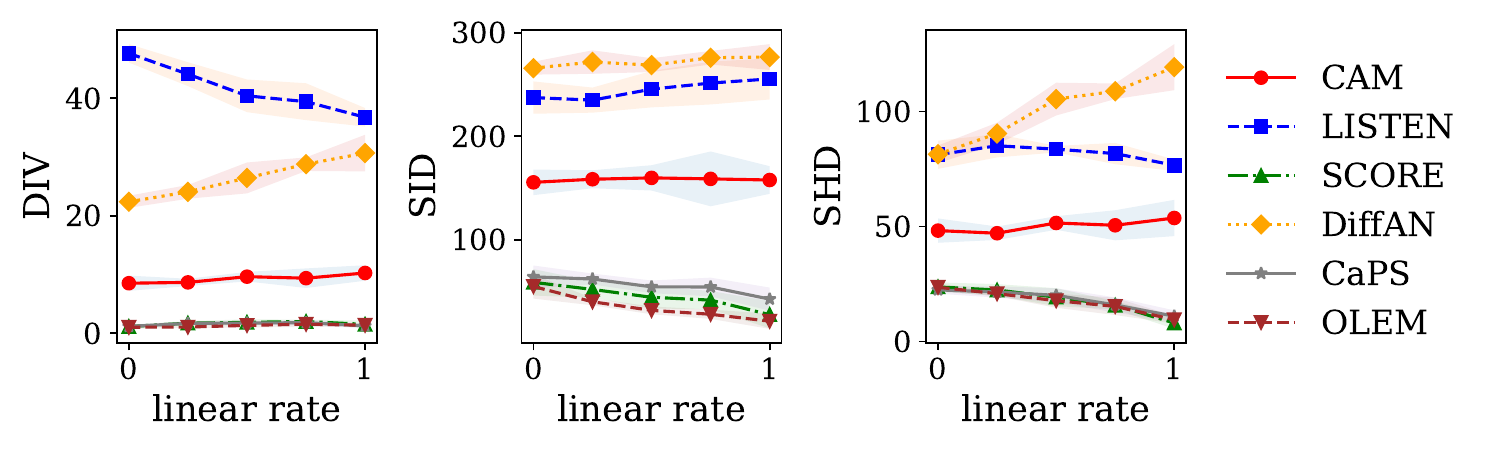}
		\caption{$d=20,~p=0.3$ with beta noise}
		\label{Fig: beta_20_0d3}
	\end{minipage}
	\hfill
	\begin{minipage}{0.49\linewidth}
		\centering
		\includegraphics[width=\linewidth]{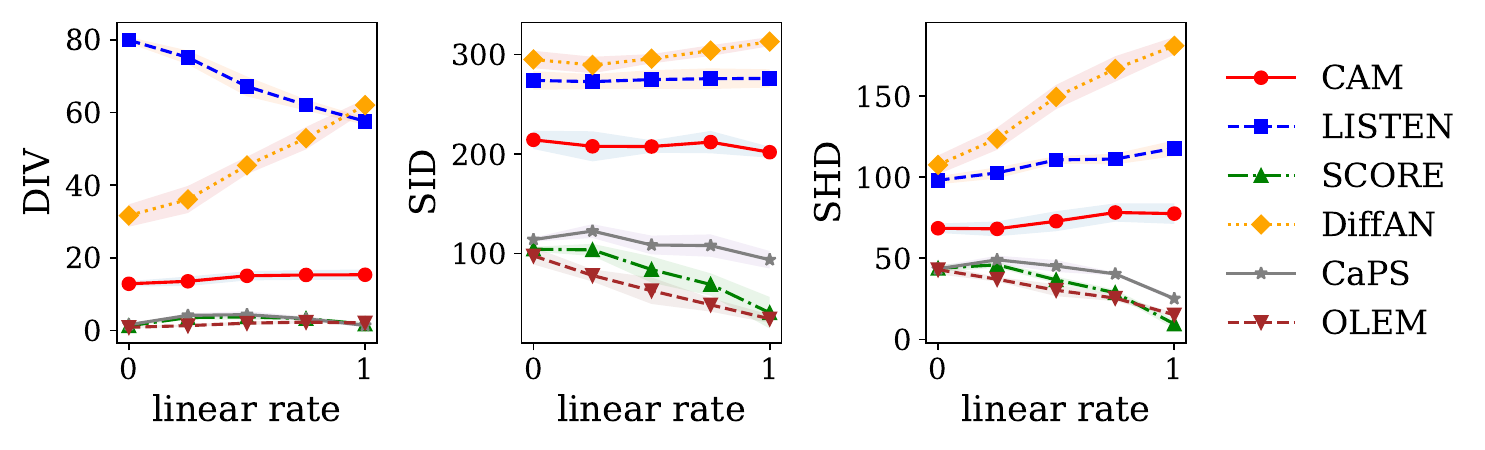}
		\caption{$d=20,~p=0.5$ with beta noise}
		\label{Fig: beta_20_0d5}
	\end{minipage}
\end{figure}

\begin{figure}[H]
	\centering
	\begin{minipage}{0.49\linewidth}
		\centering
		\includegraphics[width=\linewidth]{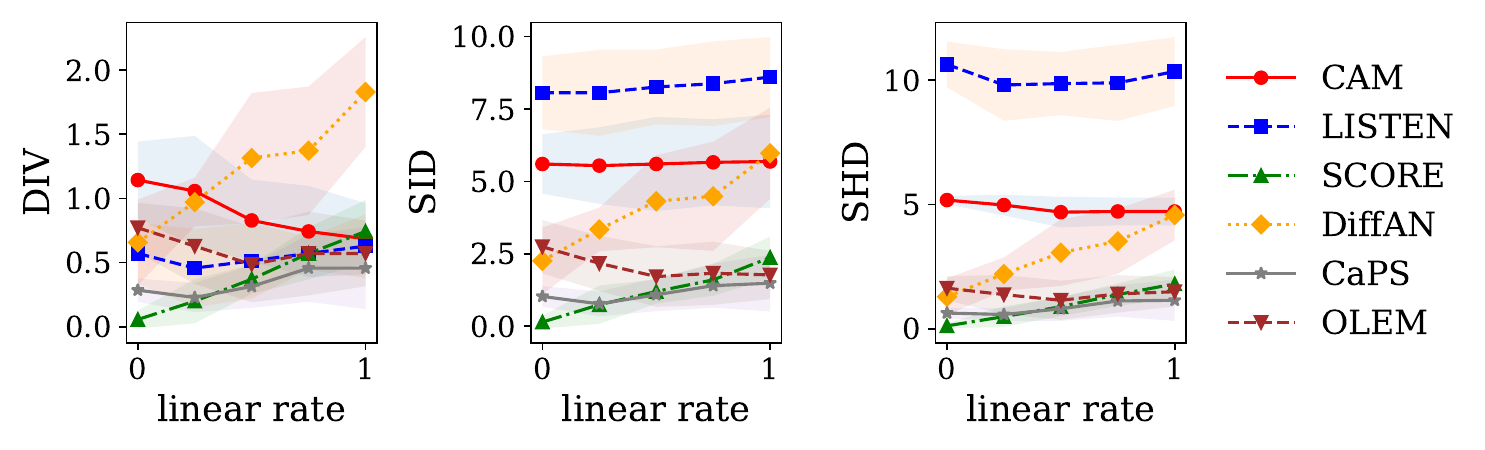}
		\caption{$d=5,~p=0.3$ with Gaussian noise}
		\label{Fig: gauss_5_0d3}
	\end{minipage}
	\hfill
	\begin{minipage}{0.49\linewidth}
		\centering
		\includegraphics[width=\linewidth]{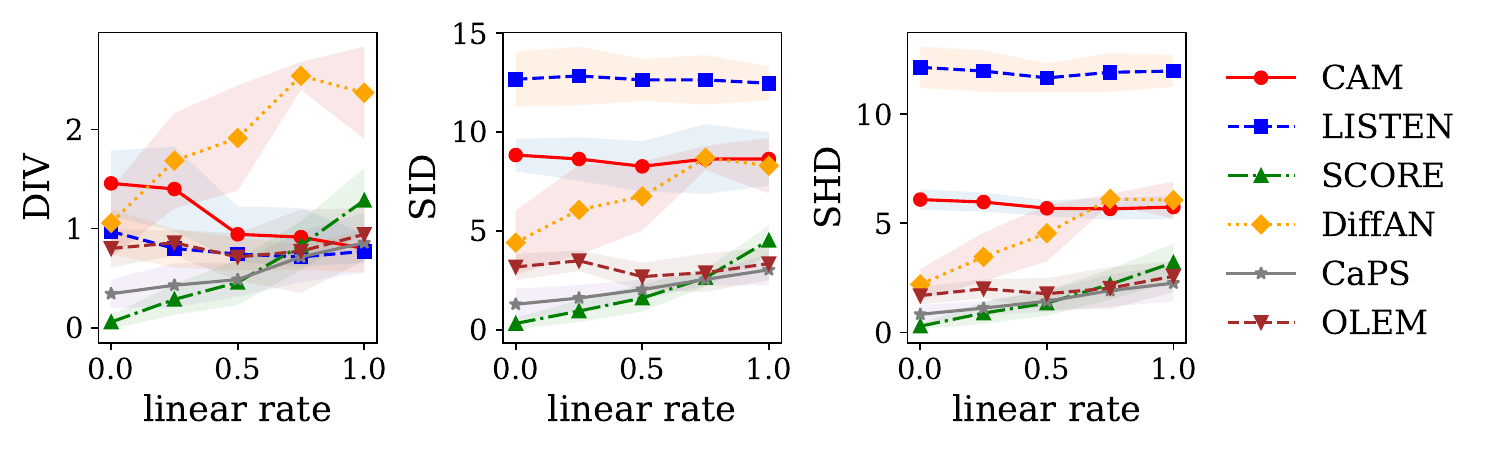}
		\caption{$d=5,~p=0.5$ with Gaussian noise}
		\label{Fig: gauss_5_0d5}
	\end{minipage}
\end{figure}

\begin{figure}[H]
	\centering
	\begin{minipage}{0.49\linewidth}
		\centering
		\includegraphics[width=\linewidth]{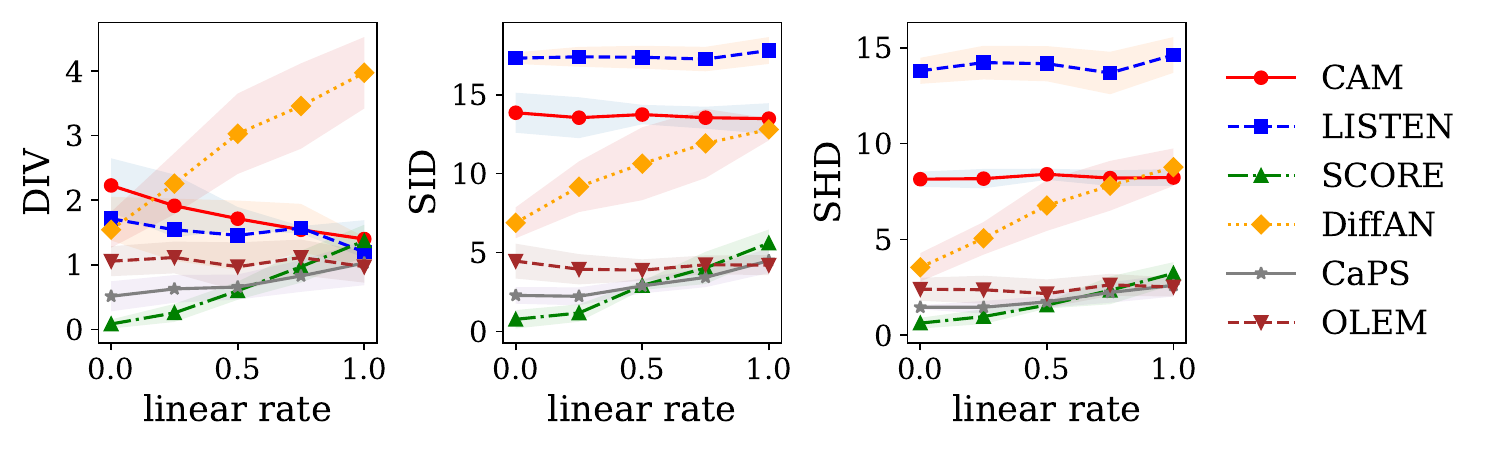}
		\caption{$d=5,~p=0.8$ with Gaussian noise}
		\label{Fig: gauss_5_0d8}
	\end{minipage}
	\hfill
	\begin{minipage}{0.49\linewidth}
		\centering
		\includegraphics[width=\linewidth]{figs/gauss_10_0d3_tight.pdf}
		\caption{$d=10,~p=0.3$ with Gaussian noise}
		\label{Fig: gauss_10_0d3}
	\end{minipage}
\end{figure}

\begin{figure}[H]
	\centering
	\begin{minipage}{0.49\linewidth}
		\centering
		\includegraphics[width=\linewidth]{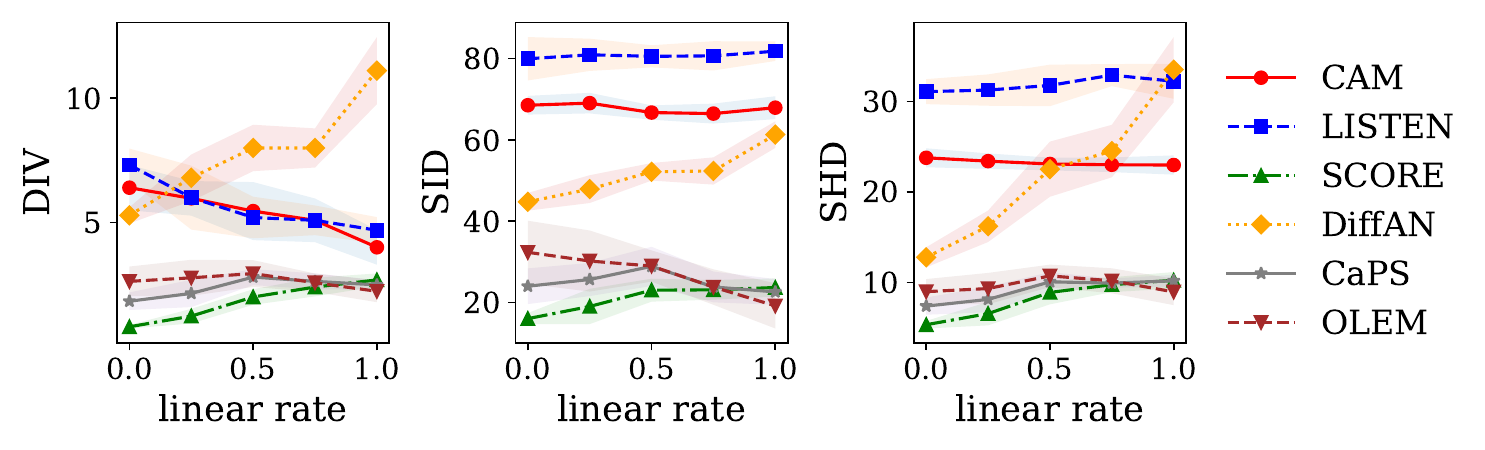}
		\caption{$d=10,~p=0.5$ with Gaussian noise}
		\label{Fig: gauss_10_0d5}
	\end{minipage}
	\hfill
	\begin{minipage}{0.49\linewidth}
		\centering
		\includegraphics[width=\linewidth]{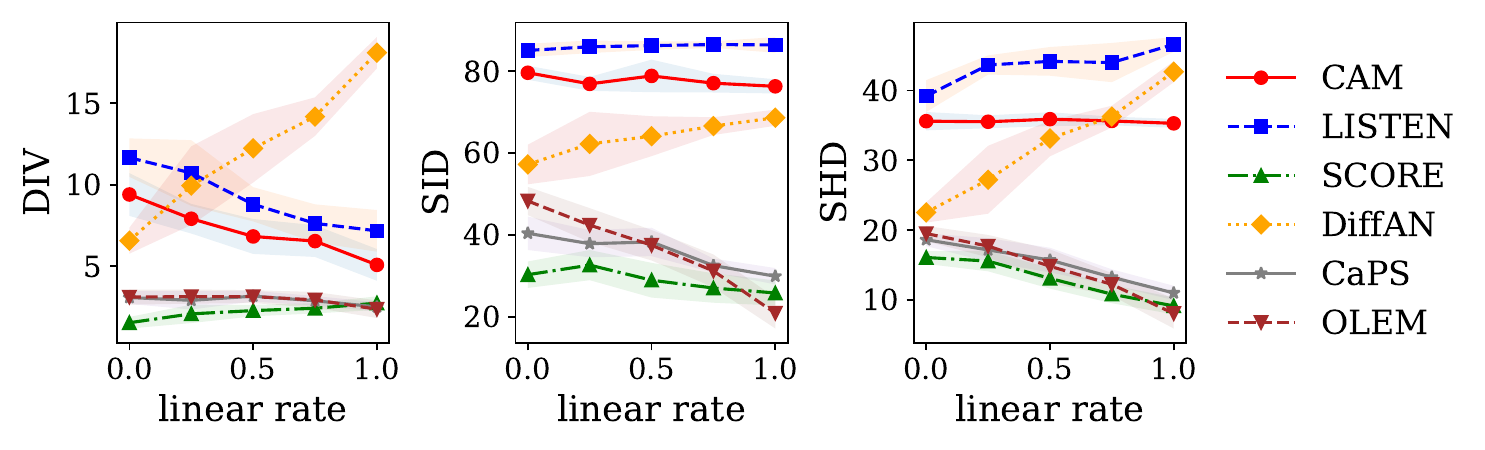}
		\caption{$d=10,~p=0.8$ with Gaussian noise}
		\label{Fig: gauss_10_0d8}
	\end{minipage}
\end{figure}

\begin{figure}[H]
	\centering
	\begin{minipage}{0.49\linewidth}
		\centering
		\includegraphics[width=\linewidth]{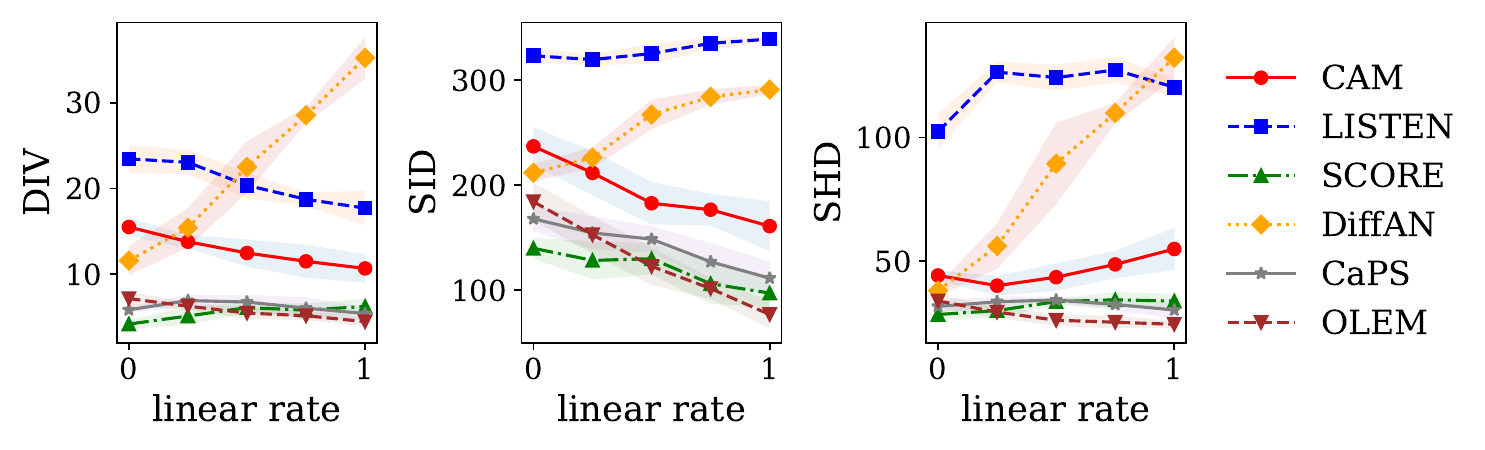}
		\caption{$d=20,~p=0.3$ with Gaussian noise}
		\label{Fig: gauss_20_0d3}
	\end{minipage}
	\hfill
	\begin{minipage}{0.49\linewidth}
		\centering
		\includegraphics[width=\linewidth]{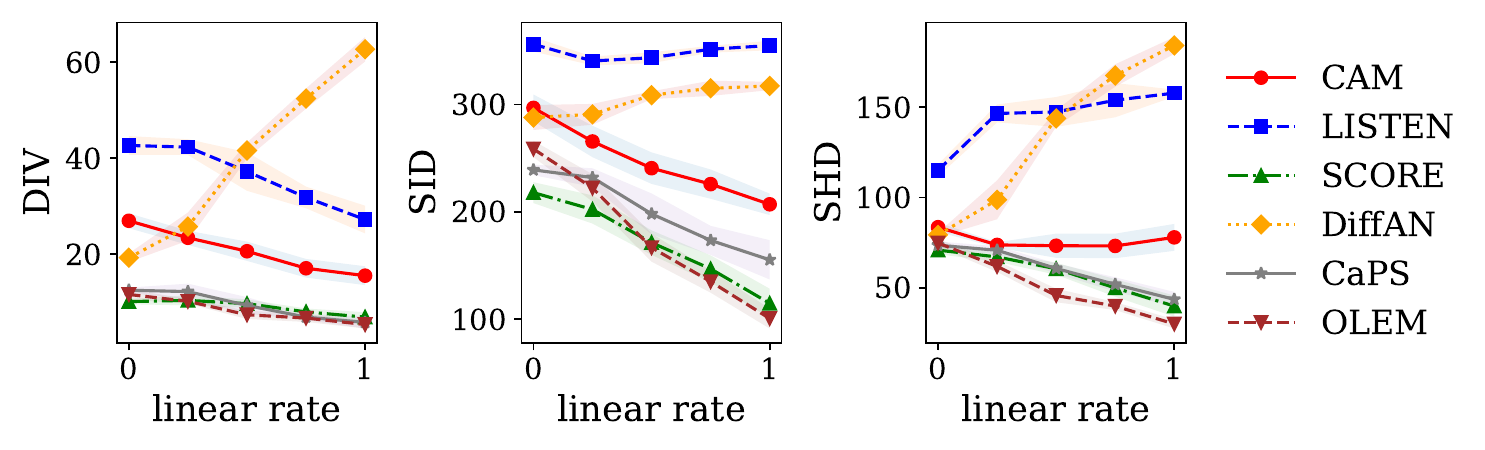}
		\caption{$d=20,~p=0.5$ with Gaussian noise}
		\label{Fig: gauss_20_0d5}
	\end{minipage}
\end{figure}

\begin{figure}[H]
	\centering
	\begin{minipage}{0.49\linewidth}
		\centering
		\includegraphics[width=\linewidth]{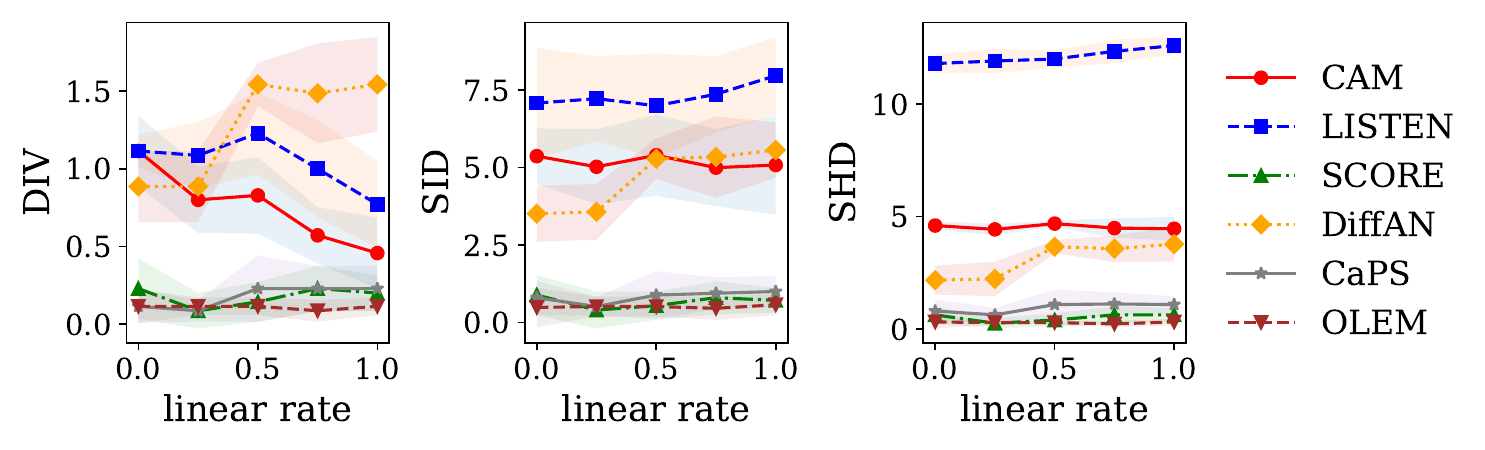}
		\caption{$d=5,~p=0.3$ with exponential noise}
		\label{Fig: exp_5_0d3}
	\end{minipage}
	\hfill
	\begin{minipage}{0.49\linewidth}
		\centering
		\includegraphics[width=\linewidth]{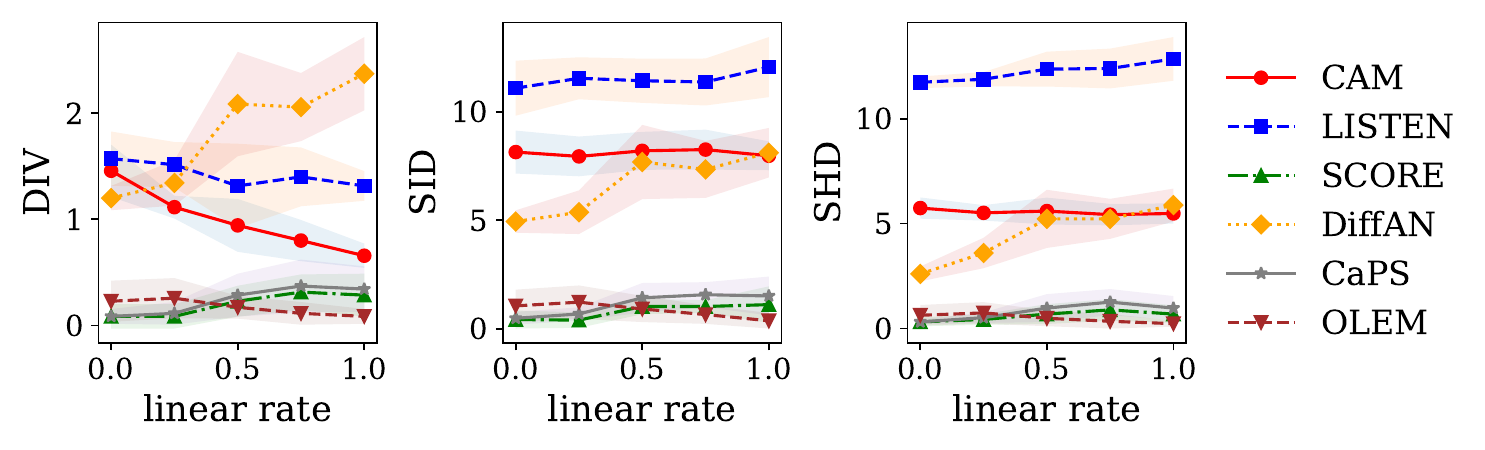}
		\caption{$d=5,~p=0.5$ with exponential noise}
		\label{Fig: exp_5_0d5}
	\end{minipage}
\end{figure}

\begin{figure}[H]
	\centering
	\begin{minipage}{0.49\linewidth}
		\centering
		\includegraphics[width=\linewidth]{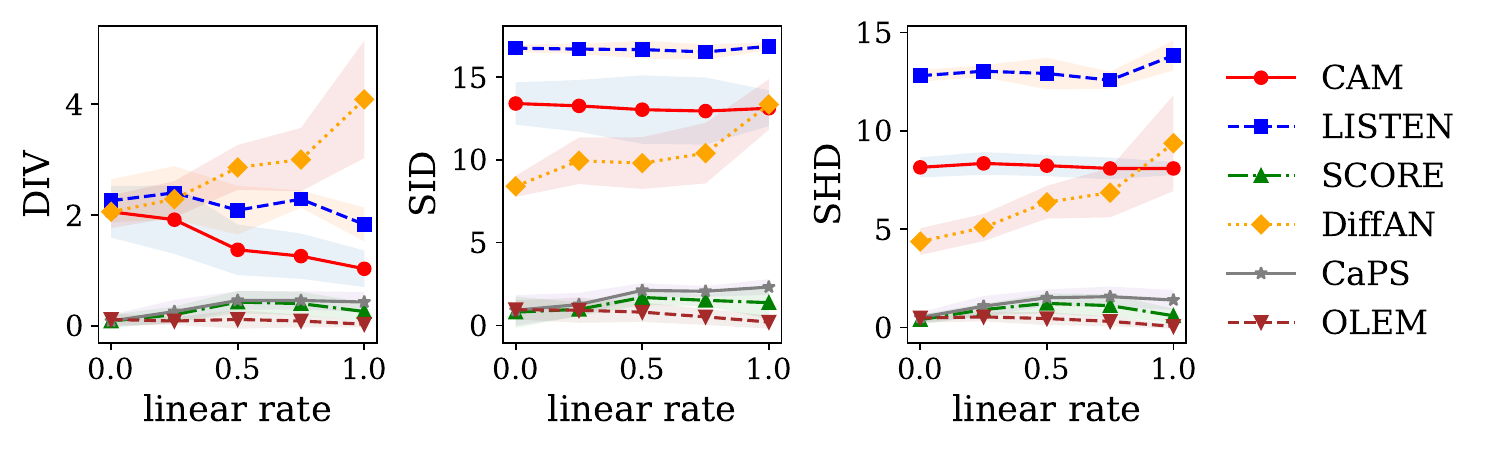}
		\caption{$d=5,~p=0.8$ with exponential noise}
		\label{Fig: exp_5_0d8}
	\end{minipage}
	\hfill
	\begin{minipage}{0.49\linewidth}
		\centering
		\includegraphics[width=\linewidth]{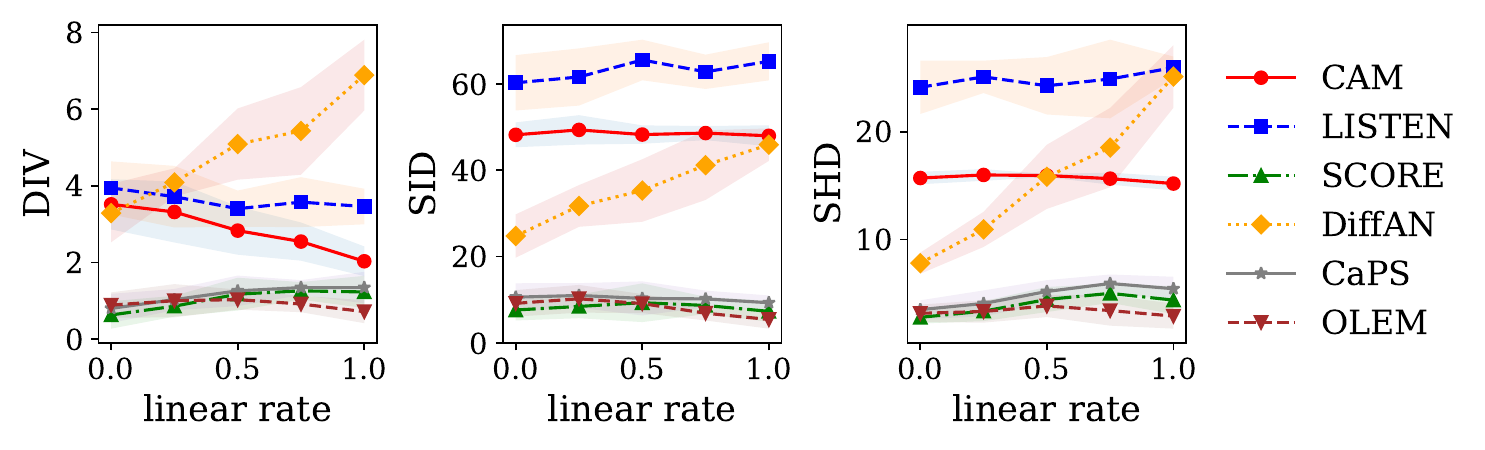}
		\caption{$d=10,~p=0.3$ with exponential noise}
		\label{Fig: exp_10_0d3}
	\end{minipage}
\end{figure}

\begin{figure}[H]
	\centering
	\begin{minipage}{0.49\linewidth}
		\centering
		\includegraphics[width=\linewidth]{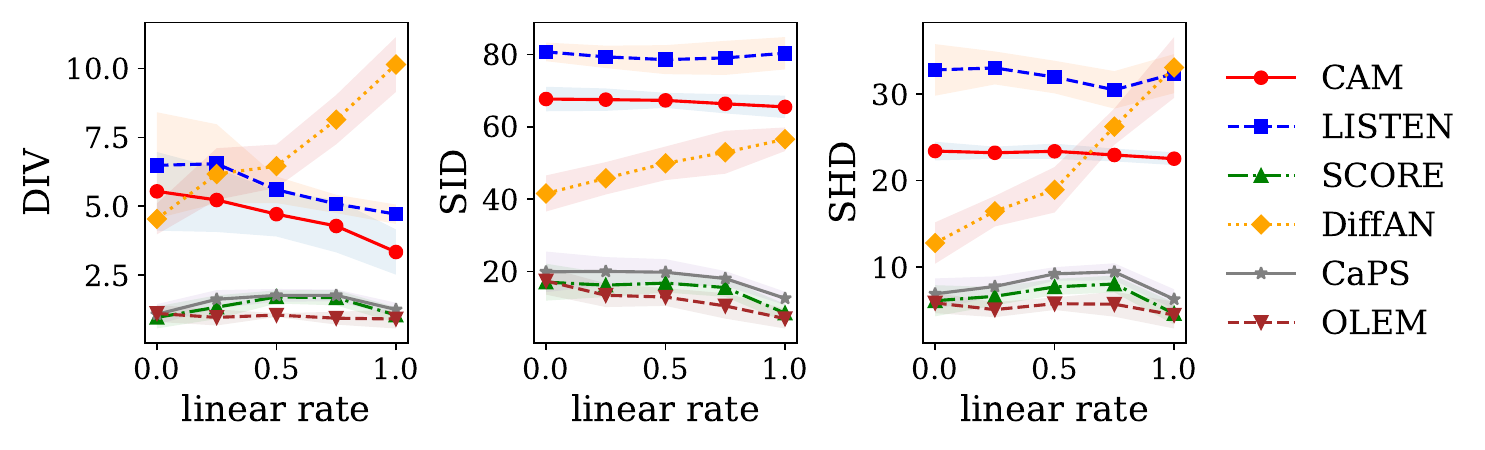}
		\caption{$d=10,~p=0.5$ with exponential noise}
		\label{Fig: exp_10_0d5}
	\end{minipage}
	\hfill
	\begin{minipage}{0.49\linewidth}
		\centering
		\includegraphics[width=\linewidth]{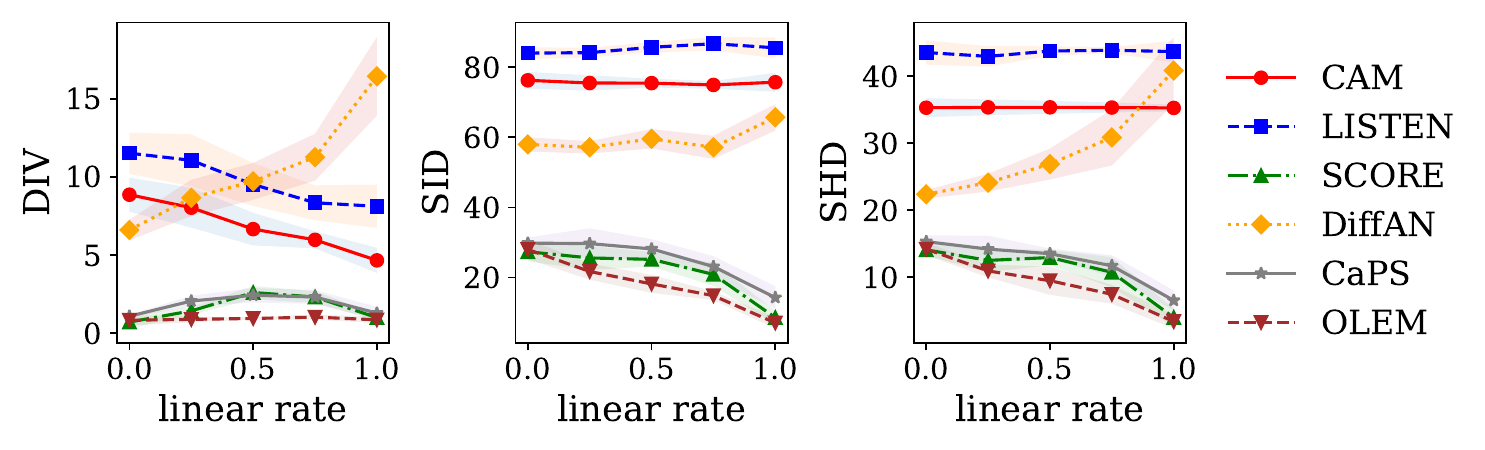}
		\caption{$d=10,~p=0.8$ with exponential noise}
		\label{Fig: exp_10_0d8}
	\end{minipage}
\end{figure}

\begin{figure}[H]
	\centering
	\begin{minipage}{0.49\linewidth}
		\centering
		\includegraphics[width=\linewidth]{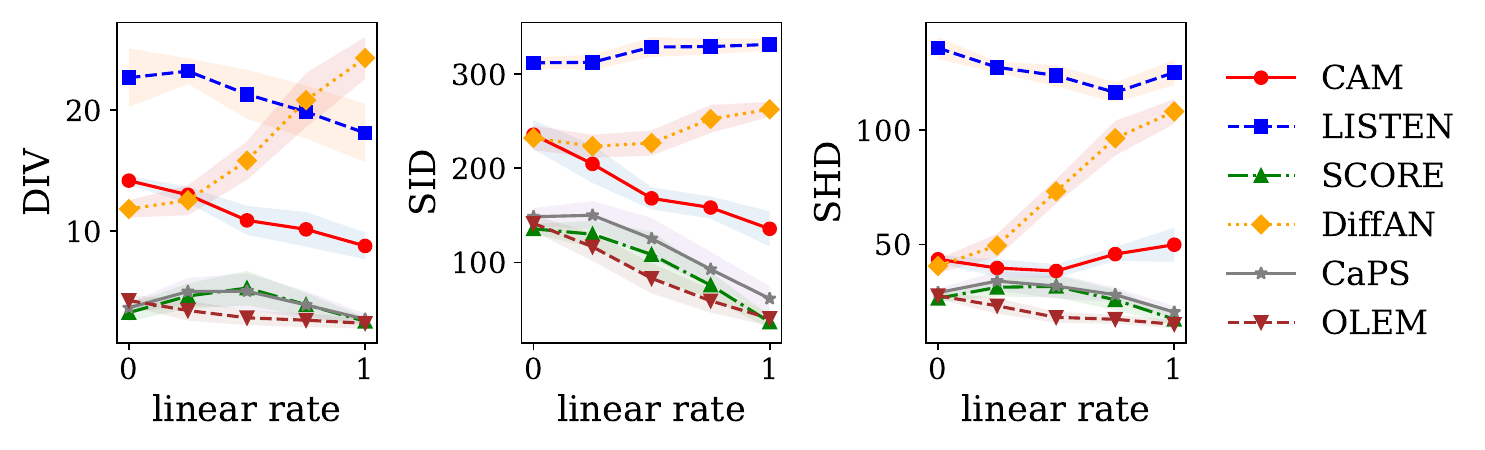}
		\caption{$d=20,~p=0.3$ with exponential noise}
		\label{Fig: exp_20_0d3}
	\end{minipage}
	\hfill
	\begin{minipage}{0.49\linewidth}
		\centering
		\includegraphics[width=\linewidth]{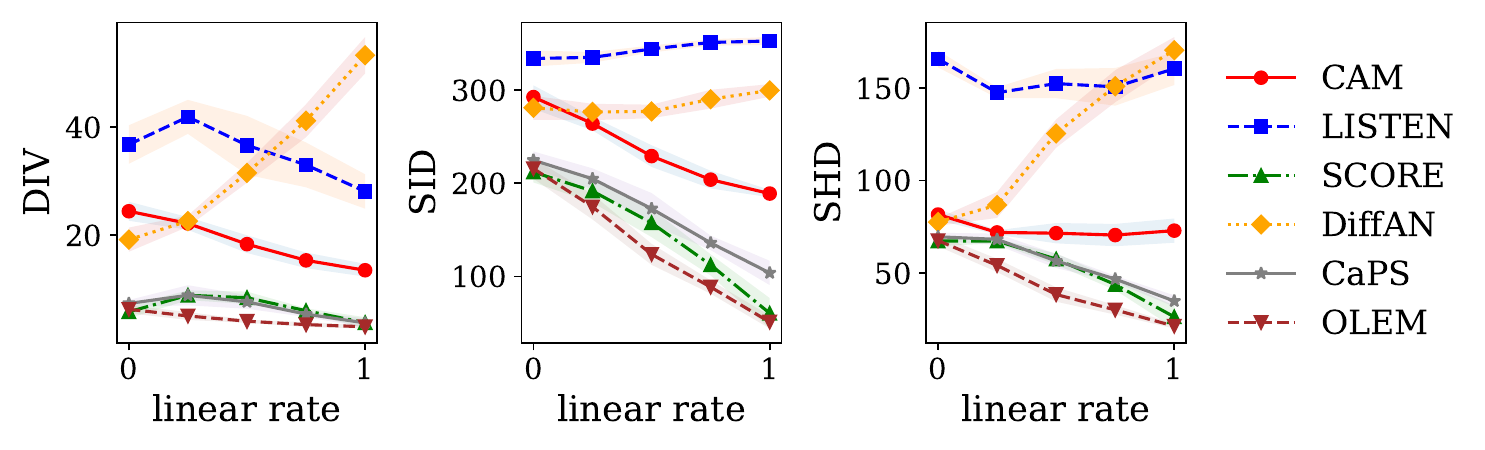}
		\caption{$d=20,~p=0.5$ with exponential noise}
		\label{Fig: exp_20_0d5}
	\end{minipage}
\end{figure}

\end{document}